\def\BibTeX{{\rm B\kern-.05em{\sc i\kern-.025em b}\kern-.08em
    T\kern-.1667em\lower.7ex\hbox{E}\kern-.125emX}}
\newtheorem{proof}{Proof}
\newtheorem{lemma}{Lemma}
\newtheorem{definition}{Definition}
\newtheorem{Problem}{Problem}
\newcolumntype{?}{!{\vrule width 1.5pt}}
\newcommand{\checkmarkcompetitor}{\Large{\Checkmark}}
\newcommand{\checkmarkmethod}{\Large{\CheckmarkBold}}
\newcommand{\questionmarkcompetitor}{\Large{\textbf{?}}}
\newcommand{\repository}{\url{https://figshare.com/s/08869576e75b74c6cc55}\xspace}
\newcommand{\method}{\funcname{McCatch}\xspace}
\newcommand{\numberradiiname}{{Number of Radii}\xspace}
\newcommand{\maxslopename}{{Maximum Plateau Slope}\xspace}
\newcommand{\maxsizename}{{Maximum Microcluster Cardinality}\xspace}
\newcommand{\maxsizenameshort}{{Max. Mc. Cardinality}\xspace}
\newcommand{\mindistname}{{Cutoff}\xspace}
\newcommand{\mindistnamebold}{{\textbf{Cutoff}}\xspace}
\newcommand{\transformationcostname}{{Transformation Cost}\xspace}
\newcommand{\transformationcostnamebold}{{\textbf{Transformation Cost}}\xspace}
\newcommand{\histname}{{Histogram of $1$NN Distances}\xspace}
\newcommand{\histnamebold}{{\textbf{Histogram of $\mathbf{1}$NN Distances}}\xspace}
\newcommand{\costcallname}{{Cost of Compression}\xspace}
\newcommand{\fplengthname}{{$1$NN Distance}\xspace}
\newcommand{\fplengthsname}{{$1$NN Distances}\xspace}
\newcommand{\fplengthnamebold}{\textbf{$\mathbf{1}$NN Distance}\xspace}
\newcommand{\mplengthname}{{Group $1$NN Distance}\xspace}
\newcommand{\mplengthnamebold}{\textbf{Group $\mathbf{1}$NN Distance}\xspace}
\newcommand{\scorename}{{Score}\xspace}
\newcommand{\degreesofisolationname}{`Bridges' Lengths'\xspace}
\newcommand{\degreeofisolationname}{`Bridge's Length'\xspace}
\newcommand{\degreeofisolationnamebold}{\textbf{`Bridge's Length'}\xspace}
\newcommand{\iaxiom}{{Isolation Axiom}\xspace}
\newcommand{\iaxiomshort}{{Isolation Ax.}\xspace}
\newcommand{\caxiom}{{Cardinality Axiom}\xspace}
\newcommand{\caxiomshort}{{Cardinality Ax.}\xspace}
\newcommand{\scalable}{{Scalable\xspace}}
\newcommand{\handsoff}{{`Hands-Off'\xspace}}
\newcommand{\generalinput}{{General Input\xspace}}
\newcommand{\generaloutput}{{General Output\xspace}}
\newcommand{\principled}{{Principled\xspace}}
\newcommand{\accurate}{{Accurate}\xspace}
\newcommand{\practical}{{Practical}\xspace}
\algnewcommand{\LineComment}[1]{\Statex \hskip\ALG@thistlm \(\textcolor{red}{\triangleright}\) \textcolor{red}{#1}}
\algnewcommand{\LineSecCommentPos}[2]{\Statex \hskip\ALG@thistlm \(\textcolor{blue}{\hspace*{#1}\triangleright}\) \textcolor{blue}{#2}}
\algnewcommand{\LineSecComment}[1]{\LineSecCommentPos{3mm}{#1}}
\algnewcommand{\SideComment}[1]{\textcolor{blue}{\Comment{#1}}}
\algnewcommand{\SideCommentPos}[2]{\textcolor{blue}{\hspace*{#1}\Comment{#2}}}
\newcommand{\Input}{\textbf{Input}\xspace}
\newcommand{\Output}{\textbf{Output}\xspace}
\newcommand{\mult}{\cdot}
\newcommand{\cartprod}{\times}
\newcommand{\suchthat}{:}
\newcommand{\domain}[1]{\mathbb{#1}}
\newcommand{\reals}{\domain{R}}
\newcommand{\naturals}{\domain{N}}
\newcommand{\distfunc}[2]{\mathtt{f}(#1,#2)}
\newcommand{\dataset}{\mathbf{P}}
\newcommand{\numpoints}{n}
\newcommand{\intdim}{u}
\newcommand{\datasubset}[1]{\mathbf{#1}}
\newcommand{\point}[1]{\mathbf{p}_{#1}}
\newcommand{\pointi}{\point{i}}
\newcommand{\pointiprime}{\point{i'}}
\newcommand{\pointiprimeprime}{\point{i''}}
\newcommand{\diamdataset}{l}
\newcommand{\radii}{\mathcal{R}}
\newcommand{\radius}[1]{r_{#1}}
\newcommand{\radiuse}{\radius{e}}
\newcommand{\radiuseprime}{\radius{e'}}
\newcommand{\nn}[2]{q_{#1}^{\left(#2\right)}}
\newcommand{\nnei}{\nn{e}{i}}
\newcommand{\nnin}[2]{f_{#1}^{\left(#2\right)}}
\newcommand{\nninei}{\nnin{e}{i}}
\newcommand{\dni}[1]{g_{#1}}
\newcommand{\dnii}{\dni{i}}
\newcommand{\plateau}[1]{\mathtt{U}^{\left(#1\right)}}
\newcommand{\plateaui}{\plateau{i}}
\newcommand{\fplateau}[1]{\mathtt{X}^{\left(#1\right)}}
\newcommand{\fplateaui}{\fplateau{i}}
\newcommand{\mplateau}[1]{\mathtt{Y}^{\left(#1\right)}}
\newcommand{\mplateaui}{\mplateau{i}}
\newcommand{\outliers}{\datasubset{A}}
\newcommand{\mcs}{\mathcal{M}}
\newcommand{\nummcs}{m}
\newcommand{\mc}[1]{\mathbf{M}_{#1}}
\newcommand{\mcj}{\mc{j}}
\newcommand{\scores}{\mathcal{S}}
\newcommand{\score}[1]{s_{#1}}
\newcommand{\scorej}{\score{j}}
\newcommand{\scoresp}{\mathcal{W}}
\newcommand{\scorep}[1]{w_{#1}}
\newcommand{\scorepi}{\scorep{i}}
\newcommand{\tree}{\mathtt{T}}
\newcommand{\treeprime}{\mathtt{T'}}
\newcommand{\op}{\mathtt{O}}
\newcommand{\graph}{\mathtt{G}}
\newcommand{\nodes}{\datasubset{M}}
\newcommand{\edges}{\mathcal{E}}
\newcommand{\component}[1]{\mathtt{G}_{#1}}
\newcommand{\componentj}{\component{j}}
\newcommand{\nodescomponent}[1]{\datasubset{M}_{#1}}
\newcommand{\nodescomponentj}{\nodescomponent{j}}
\newcommand{\edgescomponent}[1]{\mathcal{E}_{#1}}
\newcommand{\edgescomponentj}{\edgescomponent{j}}
\newcommand{\numberradii}{a}
\newcommand{\numberradiidefinition}{\numberradii \in \naturals_{> 1}}
\newcommand{\numberradiidefaultonlyvalue}{15}
\newcommand{\numberradiidefault}{\numberradii = 15}
\newcommand{\maxslope}{b}
\newcommand{\maxslopedefinition}{\maxslope \in \reals_{\geq 0}}
\newcommand{\maxslopedefaultonlyvalue}{1/10}
\newcommand{\maxslopedefault}{\maxslope = 0.1}
\newcommand{\maxsize}{c}
\newcommand{\maxsizedefinition}{\maxsize \in \naturals_{\geq 1}}
\newcommand{\maxsizedefaultonlyvalue}{1/10}
\newcommand{\maxsizedefault}{\maxsize = \left \lceil{\numpoints \mult 0.1}\right \rceil}
\newcommand{\mindist}{d}
\newcommand{\funcname}[1]{\textsc{#1}}
\newcommand{\funcdef}[1]{\funcname{#1}\left( \ \ \right)}
\newcommand{\methoddef}{\funcdef{\method}}
\newcommand{\buildopdef}{\funcdef{BuildOPlot}}
\newcommand{\spotmcsdef}{\funcdef{SpotMCs}}
\newcommand{\scoremcsdef}{\funcdef{ScoreMCs}}
\newcommand{\funccall}[3]{#1 = \funcname{#2}\left(#3\right)}
\newcommand{\funccallnoreturn}[2]{\funcname{#1}\left(#2\right)}
\newcommand{\buildopcall}{\funccall{\op}{BuildOPlop}{\dataset, \tree, \radii, \maxslope, \maxsize}}
\newcommand{\spotmcscall}{\funccall{\mcs}{SpotMCs}{\dataset, \op, \radii}}
\newcommand{\scoremcscall}{\funccall{\scores,\scoresp}{ScoreMCs}{\dataset,\mcs,\op,\radii}}
\newcommand{\joinccall}[2]{\funccall{#1}{JoinC}{#2}}
\newcommand{\selfjoinccall}[2]{\funccall{#1}{SelfJoinC}{#2}}
\newcommand{\selfjoinccallnoreturn}[1]{\funccallnoreturn{SelfJoinC}{#1}}
\newcommand{\selfjoincall}[2]{\funccall{#1}{SelfJoin}{#2}}
\newcommand{\selfjoincallnoreturn}[1]{\funccallnoreturn{SelfJoin}{#1}}
\newcommand{\slopecallnoreturn}[1]{\funccallnoreturn{Slope}{#1}}
\newcommand{\costcallnoreturn}[1]{\funccallnoreturn{Cost}{#1}}
\newcommand{\nncallnoreturn}[1]{\funccallnoreturn{NN}{#1}}
\newcommand{\fplengths}{\mathcal{X}}
\newcommand{\fplength}[1]{x_{#1}}
\newcommand{\fplengthi}{\fplength{i}}
\newcommand{\fplengthmax}{\overset{\uparrow}{x}}
\newcommand{\fplengthavg}[1]{\overset{-}{x}\overset{{\left(#1\right)}}{}}
\newcommand{\fplengthavgj}{\fplengthavg{j}}
\newcommand{\mplengths}{\mathcal{Y}}
\newcommand{\mplength}[1]{y_{#1}}
\newcommand{\mplengthi}{\mplength{i}}
\newcommand{\hist}{\mathcal{H}}
\newcommand{\histbin}[1]{h_{#1}}
\newcommand{\histbine}{\histbin{e}}
\newcommand{\histbineprime}{\histbin{e'}}
\newcommand{\costset}{\mathcal{V}}
\newcommand{\costsetvalue}{v}
\newcommand{\costsetavg}{\overset{-}{v}}
\newcommand{\transformationcost}{t}
\newcommand{\degreeofisolation}[1]{\overset{\downarrow}{g}\overset{{\left(#1\right)}}{}}
\newcommand{\degreeofisolationj}{\degreeofisolation{j}}
\begin{document}

\title{\method: Scalable Microcluster Detection in Dimensional and Nondimensional Datasets}

\author{\IEEEauthorblockN{Braulio V. Sánchez Vinces}
\IEEEauthorblockA{
\textit{ICMC, University of São Paulo, Brazil} \\
braulio.sanchez@usp.br
}
\and
\IEEEauthorblockN{Robson L. F. Cordeiro}
\IEEEauthorblockA{
\textit{SCS, Carnegie Mellon University, USA} \\
robsonc@andrew.cmu.edu
}
\and
\IEEEauthorblockN{Christos Faloutsos}
\IEEEauthorblockA{
\textit{SCS, Carnegie Mellon University, USA}\\
christos@cs.cmu.edu
}
}

\maketitle

\begin{abstract}
  How could we have an outlier detector that works even with \textit{nondimensional} data, and ranks \textit{together} both singleton microclusters~(`one-off' outliers) and nonsingleton microclusters by their anomaly scores? How to obtain scores that are \textit{principled} in one \textit{scalable} and \textit{`hands-off'} manner? Microclusters of outliers indicate coalition or repetition in fraud activities, etc.; their identification is thus highly desirable. This paper presents \method: a new algorithm that detects microclusters by leveraging our proposed `Oracle' plot (\fplengthname versus \mplengthname). We study $31$ real and synthetic datasets with up to $1$M data elements to show that \method is the only method that answers both of the questions above; and, it outperforms $11$~other methods, especially when the data has nonsingleton microclusters or is nondimensional. We also showcase \method's ability to detect meaningful microclusters in graphs, fingerprints, logs of network connections, text data, and satellite imagery.
For example, it found a $30$-elements microcluster of confirmed `Denial of Service' attacks
in the network logs,
taking only \textit{$\mathit{\sim\hspace{-0.5mm}3}$ minutes} for $222$K data elements on a stock desktop.
\end{abstract}

\begin{IEEEkeywords}
microcluster detection, metric data, scalability
\end{IEEEkeywords}

\vspace{-1mm}
\section{Introduction}
\label{sec:intro}
\begin{figure*}[ht]
    \begin{center}
        \begin{tabular}{c}
           \includegraphics[width=0.65\textwidth]{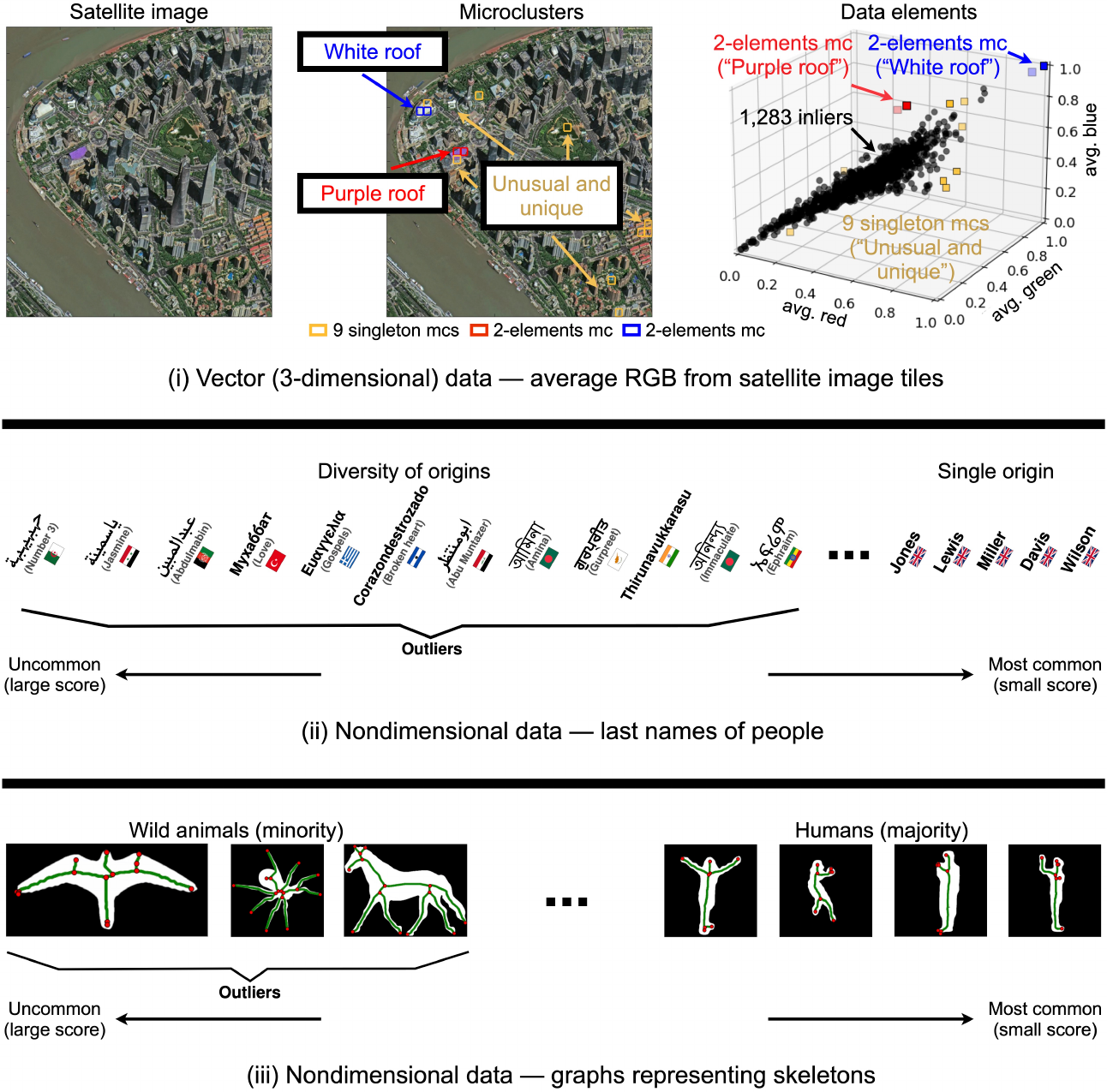} \\
        \end{tabular}
        \vspace{-2.5mm}
        \caption{
        \textbf{\method is unsupervised, and it ALSO works on nondimensional data:} (i)~on vector, $3$d data from a satellite image of Shanghai -- it spots two $2$-elements microclusters of unusually colored roofs, and a few other outliers; on nondimensional data of last names (ii) and skeletons (iii) -- it gives high anomaly scores to the few nonenglish names and skeletons of wild animals. (best viewed in color)
        \label{fig:crownJewel}
        \vspace{-12mm}
        }
    \end{center}
\end{figure*}

How could we have a method that detects microclusters of outliers even in a nondimensional dataset? How to rank together both singleton~(`one-off' outliers) and nonsingleton microclusters according to their anomaly scores? Can we define the scores in a principled way? Also, how to do that in a scalable and `hands-off' manner? Outlier detection has many applications and extensive literature~\cite{DBLP:books/sp/Aggarwal2013, Campos2016, Orairetal2010}.
The discovery of microclusters of outliers is among its most challenging tasks.
It happens because these outliers have close neighbors that make most algorithms fail~\cite{gen2out, DMCA, SCiForest}.
For example, see the red elements in the plots of Fig.~\ref{fig:crownJewel}(i) and Fig.~\ref{fig:axioms}.
Microclusters are critical for settings such as fraud detection and prevention of 
coordinated terrorist attacks, to name a few, because they indicate coalition or repetition as compared to `one-off' outliers. For example, a microcluster (or simply `mc', for short) can be formed from: 
\begin{enumerate*}[label=(\roman*)]
    \item frauds exploiting the same vulnerability in cybersecurity;
    \item reviews made by bots to illegitimately defame a product in e-commerce, or;
    \item unusual purchases of a hazardous chemical product made by ill-intended people.
\end{enumerate*}
Its discovery and comprehension is, therefore, highly desirable.

We present a new microcluster detector named~\textbf{\method} -- from \underline{\textbf{M}}icro\underline{\textbf{c}}luster \underline{\textbf{Catch}}.
The main idea is to leverage our proposed `Oracle' plot, which is a plot of \fplengthname versus \mplengthname with the latter being the distance from a cluster of data elements to its nearest neighbor. 
Our goals~are:
\begin{compactenum}[{G}1.]
    \item \textbf{\generalinput:} to work with any metric dataset, including nondimensional ones, such as sets of
    graphs, short texts, fingerprints, sequences of DNA, documents, etc.
    \item \textbf{\generaloutput:} to rank singleton (`one-off' outliers) and nonsingleton mcs \textit{together}, by their anomalousness.
    \item \textbf{\principled:} to obey axioms.
    \item \textbf{\scalable:} to be subquadratic on the number of elements.
    \item \textbf{\handsoff:} to be automatic with no manual tuning. 
\end{compactenum}

We studied $31$ datasets with up to $1$M elements to show that \method achieves all five goals, 
while $11$ of the closest state-of-the-art competitors fail.
Fig.~\ref{fig:crownJewel} showcases \method's ability to process dimensional and \textit{nondimensional} data:
\begin{enumerate*}[label=(\roman*)]
\item On vector, $3$d data from a satellite image of Shanghai, it spots two $2$-elements mcs of buildings with unusually colored roofs, and; a few other outliers.
On nondimensional data of last names~\item and skeletons~(iii), it gives high anomaly scores to the few nonenglish names and wild-animal skeletons.
\end{enumerate*}
The details of this experiment are given later; see Sec.~\ref{sec:exp}.

\vspace{-1mm}
\section{Problem \& Related Work}
\label{sec:probrel}
\vspace{-1.1mm}
\subsection{Problem Statement}
\begin{Problem}[\textbf{Main problem}]\label{def:problem}
It is as follows:
\begin{compactitem}
    \item Given: 
    \begin{compactitem}
        \item a metric dataset $\dataset~=~\{\point{1},\dots \point{\numpoints}\}$, where $\pointi$ is a data element (`point', in a dimensional case);
        \item a distance/dis-similarity function $\distfunc{\pointi}{\pointiprime}$
        (e.g., Euclidean~/~$L_p$ for dimensional data; provided by domain expert for non-dimensional data).
    \end{compactitem}
    \item Find:
        \begin{enumerate*}[label=\textbf{\underline{(\roman*)}}]
        \item a set of disjoint microclusters $\mcs=\{\mc{1}, \dots$ $ \mc{\nummcs} \}$, ranked most-strange-first, and 
        \item the set of corresponding anomaly scores $\scores = \{\score{1}, \dots \score{\nummcs} \}$
        \end{enumerate*}
        \item to match human intuition (see
        the axioms in Fig.~\ref{fig:axioms}).
\end{compactitem}
\end{Problem}

\vspace{0.2mm}
For ease of explanation, from now on, we shall describe our algorithm using the term `point' for each data element.
However, notice that the algorithm only needs a distance function between two elements -- \underline{NOT} coordinates.
\vspace{-1.1mm}
\subsection{Related Work}
There is a huge literature on outlier and microcluster detection.
However, as we show in Tab.~\ref{tab:salesman}, only our \method meets
all the specifications. Next, we go into more detail.

\paragraph*{\hspace{-5mm} \underline{Related work vs. goals}} Outlier detection has many applications, including finance~\cite{finance, finance2}, manufacturing~\cite{manufactoring, manufactoring2}, environmental monitoring~\cite{environmental, environmental2}, to name a few. It is thus covered by extensive literature~\cite{DBLP:books/sp/Aggarwal2013,Campos2016,Orairetal2010}.
The existing methods can be categorized in various ways, for example, based on the measures they employ, including density-, depth-, angle- and distance-based methods, or by their modeling strategy, such as statistical-modeling, clustering, ensemble, etc., among others.

\begin{table*}
\begin{center}
\caption{\textbf{\method matches all specs}, while the competitors miss one or more of the features. \label{tab:salesman}
\vspace{-2.5mm}
}
\resizebox{1.0\textwidth}{!}{
\begin{tabular}{l?c|c|c|c|c|c|c|c|c|c|c|c|c|c|c|c|c|c?c|c|c?c|c|c??c|}
       & \multicolumn{18}{c?}{\LARGE{Classic}} & \multicolumn{3}{c?}{\LARGE{Deep}} & \multicolumn{3}{c??}{\LARGE{Clustering}} & \\
       \diagbox{\LARGE Property}{\LARGE Method}
       & \rotatebox{70}{ABOD~\cite{ABOD}}
       & \rotatebox{70}{ALOCI~\cite{LOCI}} 
       & \rotatebox{70}{DB-Out~\cite{DB-Out}}
       & \rotatebox{70}{DIAD~\cite{DIAD}}
       & \rotatebox{70}{\underline{D.MCA~\cite{DMCA}}}
       & \rotatebox{70}{FastABOD~\cite{ABOD}} 
       & \rotatebox{70}{\underline{Gen2Out~\cite{gen2out}}} 
       & \rotatebox{70}{GLOSH~\cite{GLOSH}} 
       & \rotatebox{70}{iForest~\cite{iForest}} 
       & \rotatebox{70}{kNN-Out~\cite{kNN-Out}}
       & \rotatebox{70}{LDOF~\cite{Zhang:2009}}
       & \rotatebox{70}{LOCI~\cite{LOCI}} 
       & \rotatebox{70}{LOF~\cite{LOF}} 
       & \rotatebox{70}{ODIN~\cite{ODIN}}
       & \rotatebox{70}{PLDOF~\cite{Pamula:2011}}
       & \rotatebox{70}{SCiForest~\cite{SCiForest}}
       & \rotatebox{70}{Sparkx~\cite{Sparkx}}
       & \rotatebox{70}{XTreK~\cite{XTreK}}
       & \rotatebox{70}{Deep SVDD~\cite{DeepSVDD}}
       & \rotatebox{70}{DOIForest~\cite{DOIForest}}
       & \rotatebox{70}{RDA~\cite{RDA}}
       & \rotatebox{70}{DBSCAN~\cite{Ester:1996}} 
       & \rotatebox{70}{KMeans{-}{-}~\cite{Chawla:2013}}
       & \rotatebox{70}{OPTICS~\cite{Ankerst:1999}}
       & \rotatebox{70}{\LARGE{\underline{\textbf{\method}}}} \\
\hline
	{\Large G1. \generalinput} & & & \questionmarkcompetitor & & \questionmarkcompetitor & & & \questionmarkcompetitor & & \questionmarkcompetitor & \questionmarkcompetitor & \questionmarkcompetitor & \questionmarkcompetitor & \questionmarkcompetitor & & & & & & & & \questionmarkcompetitor & & \questionmarkcompetitor & \checkmarkmethod \\ 
    {\Large G2. \generaloutput} & & & & & & & \checkmarkcompetitor & & & & & & & & & & & & & & & & & & \checkmarkmethod \\ 
    {\Large G3. \principled~(group axioms)} & & & & & & & & & & & & & & & & & & & & & & & & & \checkmarkmethod \\ 
    {\Large G4. \scalable} & & \checkmarkcompetitor & & & & & \checkmarkcompetitor & & \checkmarkcompetitor & & & & & & \checkmarkcompetitor & \checkmarkcompetitor & \checkmarkcompetitor & \checkmarkcompetitor & \checkmarkcompetitor & \checkmarkcompetitor & \checkmarkcompetitor & & \checkmarkcompetitor & & \checkmarkmethod \\ 
    {\Large G5. \handsoff} & \checkmarkcompetitor & & & & \checkmarkcompetitor & \checkmarkcompetitor & \checkmarkcompetitor & \checkmarkcompetitor & \checkmarkcompetitor & & & \checkmarkcompetitor & & & & \checkmarkcompetitor & & \checkmarkcompetitor & & & & & & & \checkmarkmethod \\     
    {\Large Deterministic} & \checkmarkcompetitor & & \checkmarkcompetitor & & & \checkmarkcompetitor & & \checkmarkcompetitor & & \checkmarkcompetitor & \checkmarkcompetitor & \checkmarkcompetitor & \checkmarkcompetitor & \checkmarkcompetitor & & & & & & & & \checkmarkcompetitor & & \checkmarkcompetitor & \checkmarkmethod \\
    {\Large Explainable Results} & \checkmarkcompetitor & \checkmarkcompetitor & \checkmarkcompetitor & \checkmarkcompetitor & \checkmarkcompetitor & \checkmarkcompetitor & \checkmarkcompetitor & \checkmarkcompetitor & \checkmarkcompetitor & \checkmarkcompetitor & \checkmarkcompetitor & \checkmarkcompetitor & \checkmarkcompetitor & \checkmarkcompetitor & \checkmarkcompetitor & \checkmarkcompetitor & \checkmarkcompetitor & \checkmarkcompetitor & & & & \checkmarkcompetitor & \checkmarkcompetitor & \checkmarkcompetitor & \checkmarkmethod \\
    {\Large Rank Results} & \checkmarkcompetitor & \checkmarkcompetitor & \checkmarkcompetitor & \checkmarkcompetitor & \checkmarkcompetitor & \checkmarkcompetitor & \checkmarkcompetitor & \checkmarkcompetitor & \checkmarkcompetitor & \checkmarkcompetitor & \checkmarkcompetitor & \checkmarkcompetitor & \checkmarkcompetitor & \checkmarkcompetitor & \checkmarkcompetitor & \checkmarkcompetitor & \checkmarkcompetitor & \checkmarkcompetitor & \checkmarkcompetitor & \checkmarkcompetitor & \checkmarkcompetitor & & & & \checkmarkmethod \\ \hline
\end{tabular}
}
\vspace{-6mm}
\end{center}
\end{table*}

\vspace{-0.5mm}
This section presents the related work in a nontraditional way.
We describe it considering the goals of our introductory section as we understand an ideal method should work with a \generalinput~(G1) and give a \generaloutput~(G2) that is \principled~(G3) in a \scalable~(G4) and \handsoff~(G5)~way.

\vspace{-0.5mm}
\paragraph*{\hspace{-5mm} \underline{\generalinput -- goal G1}}
Many methods fail w.r.t.~G1.
It includes famous isolation-based detectors, e.g., iForest~\cite{iForest}, Gen2Out~\cite{gen2out}, and~SCiForest~\cite{SCiForest}, other tree-based methods, such as
XTreK~\cite{XTreK} and
DIAD~\cite{DIAD}, hash-based approaches, e.g., Sparkx~\cite{Sparkx},
angle-based ones, like ABOD/FastABOD~\cite{ABOD},
some clustering-based methods, e.g., KMeans{-}{-}~\cite{Chawla:2013} and PLDOF~\cite{Pamula:2011},
and even acclaimed deep-learning-based detectors, such as RDA~\cite{RDA}, DOIForest~\cite{DOIForest}, and Deep SVDD~\cite{DeepSVDD}.
They all require access to explicit feature values.
Note that \textbf{embedding} may allow these methods to work on nondimensional data.
Turning elements of a metric or non-metric space into low-dimensionality vectors that preserve the given distances is exactly the problem of multi-dimensional scaling~\cite{BorgAndGroenen2005} and more recently t-SNE and UMAP~\cite{UMAP}. However, these strategies have two disadvantages:
\begin{enumerate*}[label=(\roman*)]
\item they are \textit{quadratic} on the count of elements~\cite{tSNE}, and; 
\item they require as input the embedding dimensionality.
\end{enumerate*}
Distinctly, density- and distance-based detectors -- as well as some clustering methods that detect outliers as a byproduct of the process, like DBSCAN~\cite{Ester:1996} and OPTICS~\cite{Ankerst:1999} -- may handle nondimensional data if adapted to work with a suitable distance function, and, ideally, also with a metric tree, like a Slim-tree~\cite{DBLP:journals/tkde/TrainaTFS02} or an M-tree~\cite{Mtree}.
Examples are LOCI~\cite{LOCI}, LOF~\cite{LOF}, GLOSH~\cite{GLOSH}, kNN-Out~\cite{kNN-Out}, DB-Out~\cite{DB-Out}, ODIN~\cite{ODIN}, LDOF~\cite{Zhang:2009}, and D.MCA~\cite{DMCA}.
However,~faster hypercube-based versions of some of these methods, e.g., ALOCI~\cite{LOCI}, require the features.

\paragraph*{\hspace{-5mm} \underline{\generaloutput -- goal G2}}
Most methods fail in~G2. 
They miss every mc whose points have close neighbors,
like ABOD, iForest, LOCI, Deep SVDD, RDA, GLOSH, kNN-Out, LOF, DB-Out, ODIN, DIAD, Sparkx, XTreK, and DOIForest;
or, fail to group these points into an entity with a score,
e.g., D.MCA, SCiForest, LDOF, PLDOF, DBSCAN, OPTICS, and KMeans{-}{-}.
Gen2Out is the only exception.

\vspace{-0.5mm}
\paragraph*{\hspace{-5mm} \underline{\principled -- goal G3}}
Goal~G3 regards the generation of scores in a principled manner for both singleton and nonsingleton mcs.
Gen2Out is the only method that provides scores for microclusters; thus, all other methods fail to achieve~G3.
Unfortunately, Gen2Out also fails w.r.t.~G3 as it does not identify nor obey any axiom for generating microcluster scores.
It does not obey axioms that we propose either.

\vspace{-0.5mm}
\paragraph*{\hspace{-5mm} \underline{\scalable -- goal G4}}
Some methods are scalable, like ALOCI, iForest, Gen2Out, SCiForest, PLDOF, KMeans{-}{-}, Sparkx, XTreK, DOIForest, RDA, and Deep SVDD.
They achieve~G4.
Distinctly, methods like DIAD, D.MCA, ABOD, FastABOD, GLOSH, LOCI, kNN-Out, LOF, DB-Out, ODIN, LDOF, DBSCAN, and OPTICS fail in~G4 as they are quadratic (or worse) on the count of~points. 

\vspace{-0.5mm}
\paragraph*{\hspace{-5mm} \underline{\handsoff -- goal G5}}
Goal~G5 regards the ability of a method to process an unlabeled dataset without manual tuning.
Methods that achieve~G5 are either hyperparameter free, like ABOD;
or, they have a default hyperparameter configuration to be used in all datasets, 
as it happens with FastABOD, Gen2Out, D.MCA, iForest, SCiForest, GLOSH, LOCI, and XTreK.
On the other hand, many methods fail w.r.t.~G5.
Examples are ALOCI, DB-Out, kNN-Out, LOF, LDOF, PLDOF, DBSCAN, OPTICS, KMeans{-}{-}, RDA, Deep SVDD, ODIN, DIAD, Sparkx, and DOIForest as they all require user-defined hyperparameter values.

\paragraph*{\hspace{-5mm} \underline{Conclusion: Only \method meets all the specifications}}
As mentioned earlier, and as shown in Tab.~\ref{tab:salesman},
\textit{\underline{only}} \method
fulfills all the specs.
Additionally, in contrast to several competitors, our method is deterministic and ranks outliers by their anomalousness. 
\method also returns explainable results
thanks to the plateaus of our `Oracle' plot (See  Sec.~\ref{sec:intuition}), which roughly correspond to the distance to the nearest neighbor.
Distinctly, black-box methods suffer on explainability.

\section{Proposed Axioms}
\label{sec:axioms}
How could we verify if a method reports scores in a principled way?
To this end, we propose reasonable axioms~that match human intuition and, thus, should be obeyed by any method when ranking microclusters w.r.t. their anomalousness.
Importantly, our axioms apply to singleton and also~to nonsingleton 
microclusters, so that both `one-off' outliers~and `clustered' outliers are included seamlessly into a \textit{\underline{single}} ranking.

The axioms state that the score $\scorej$ of a microcluster $\mcj$ depends on:
\begin{enumerate*}[label=(\roman*)]
\item the smallest distance between any point~$\pointi\in\mcj$ and this point's nearest inlier -- let this distance be known as the \degreeofisolationnamebold $\degreeofisolationj$ of $\mcj$, 
and;
\item the cardinality $\mid\hspace{-1mm}\mcj\hspace{-1mm}\mid$ of $\mcj$.
\end{enumerate*}
Hence, given any two microclusters that differ in one of these properties with all else being equal, we must~have:
\begin{compactitem}
\item \textbf{\iaxiom:} if they differ in the \degreesofisolationname, 
the furthest away microcluster has the largest score.
\item \textbf{\caxiom:} if they differ in the cardinalities, the less populous microcluster has the largest anomaly score.
\end{compactitem}

Fig.~\ref{fig:axioms} depicts our axioms in scenarios with inliers forming Gaussian-, cross- or arc-shaped clusters;
the green mc~(bottom) is always weirder, i.e., larger score, than the one in red (left).

\begin{figure*}[ht]
    \begin{center}
        \begin{tabular}{c}
           \includegraphics[width=0.65\textwidth]{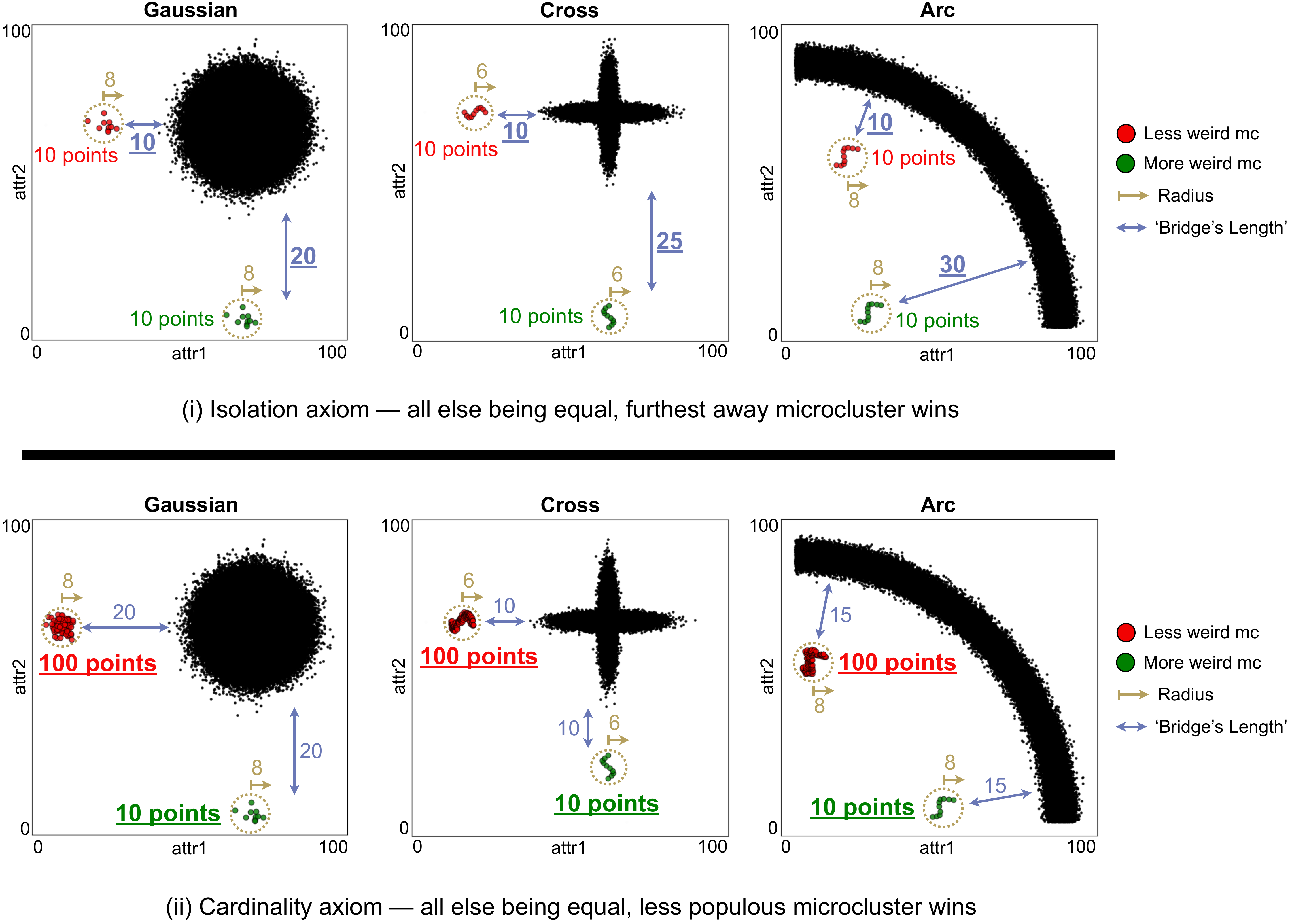} \\
        \end{tabular}
        \vspace{-2.5mm}
        \caption{\textbf{Proposed Axioms:} the green microcluster is always more weird, i.e., larger anomaly score. All else being equal, (i)~\iaxiom~--~furthest away microcluster wins; (ii)~\caxiom~--~less populous microcluster wins. (best viewed in color)
        \label{fig:axioms}
        \vspace{-12mm}
        }
    \end{center}
\end{figure*}

\vspace{-3mm}
\section{Proposed Method}
\label{sec:meth}
\vspace{-2mm}
How could we have an outlier detector that works with a \generalinput~(G1) and gives a \generaloutput~(G2) that is \principled~(G3) in a \scalable~(G4) and \handsoff~(G5)~way?
Here we answer the question with \method.
To our knowledge, it is the first method that achieves all of these five goals.
We begin with the main intuition, and then detail our proposal.

\vspace{-3.5mm}
\subsection{Intuition \& the `Oracle' Plot}\label{sec:intuition}
\vspace{-2mm}

The high-level idea is to spot (i) points that are far from everything else (`one-off' outliers), and (ii) groups of points that form a microcluster, that is, they are close to each other, but far from the rest. The tough parts are how to quantify these intuitive concepts.
We propose to use the new `Oracle'~plot.

\paragraph*{\hspace{-5mm} \underline{`Oracle' plot}}
It focuses on plateaus formed in the count of neighbors of each point as the neighborhood radius varies.
This idea is shown in Fig.~\ref{fig:oracle:plot}.
We present a toy dataset in~\ref{fig:oracle:plot}(i), and its `Oracle' plot in~\ref{fig:oracle:plot}(ii).
For easy of understanding, let us consider five points of interest:
inlier~`A'~in black;
`halo-point' `B'~in orange;
`mc-point' `C' in green;
`halo-mc-point' `D' in violet, and;
`isolate-point' `E' in red.
Our `Oracle' plot groups inliers like `A' at its bottom-left part.
The other parts of the plot distinguish the outliers by type; see `B', `C', `D', and~`E'.
Note that the outliers `C' and `D' that belong to the mc of green/violet points are isolated at the top part of the plot.

The details are in Fig.\ref{fig:oracle:plot}(iii), where we plot the count of neighbors versus the radius for the points of interest.
Each point is counted together with its neighbors, so the minimum count is~$1$.
The large, blue curves give the average count for the dataset.
A plateau exists if the count of neighbors of a point remains (quasi) unaltered for two or more radii;
let this count be the \textit{height} of the plateau.
The \textit{length} of the plateau is the difference between its largest radius, and the smallest one.
Note that the counts for each point of interest form at least two plateaus:
the first plateau, and the last one, referring to small, and large radii respectively.
Middle plateaus may also~exist.

\begin{figure*}[ht]
    \begin{center}
        \begin{tabular}{c}
           \includegraphics[width=0.65\textwidth]{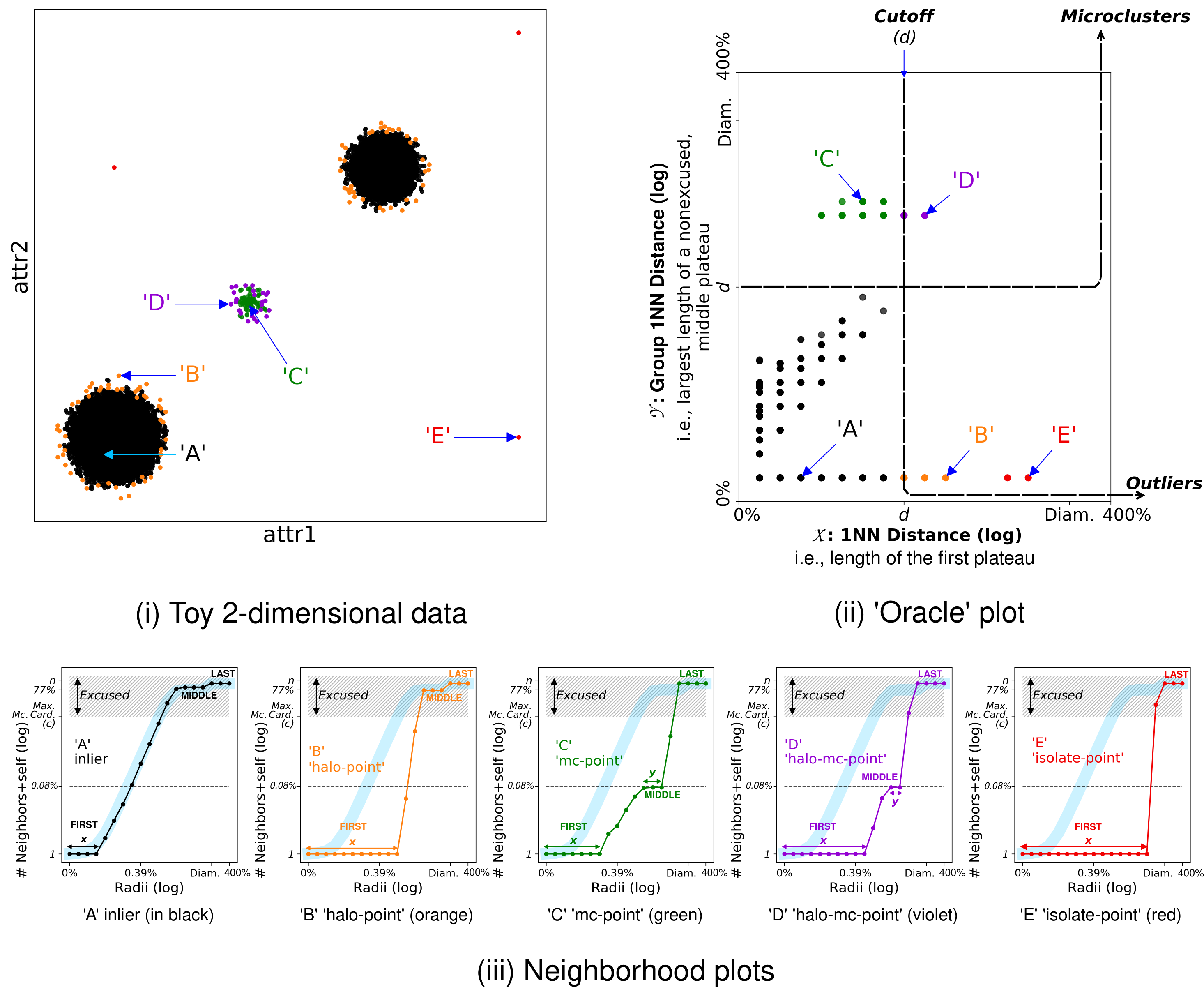} \\
        \end{tabular}
        \vspace{-2.5mm}
        \caption{\textbf{\underline{Intuition \& the `Oracle' plot:}} \method spots outliers in a dataset~(i) using our `Oracle' plot~(ii).
        The plot groups inliers like point `A' (in black) at its bottom-left, and distinguishes outliers by type;
        see `B'~(orange), `C'~(green), `D'~(violet), and~`E'~(red).
        Outliers `C' and `D' from the microcluster in green/violet are isolated at the top.
        It is made possible by capitalizing on plateaus formed in the count of neighbors of each point as the neighborhood radius varies; see examples in (iii).
        (best viewed in color)
        \label{fig:oracle:plot}
        \vspace{-6mm}
        }
    \end{center}
\end{figure*}

\paragraph*{\hspace{-5mm} \underline{`Plateaus' correspond to clusters}}
In fact, the plateaus follow~~a~~hierarchical~~clustering~~structure.~~Each~~plateau~~in~~the

\noindent count of neighbors of a point describes this point's cluster in one level of the hierarchy.
This is why the first, and the last plateaus always exist:
the first one regards a low level, where the point is a cluster of itself (or nearly so);
the last plateau refers to a higher level where the point, and (nearly) all other points cluster together.
Middle plateaus may be multiple, as the point may belong to clusters in many intermediary levels of the hierarchy.
Note that a plateau shows:
\begin{enumerate*}[label=(\roman*)]
\item the cluster's cardinality, and;
\item the cluster's distance to other points.
\end{enumerate*}
The plateau's height is the cardinality; its length is the distance.

\paragraph*{\hspace{-5mm} \underline{Examples of `plateaus'}}
The first plateaus in Fig.~\ref{fig:oracle:plot}(iii)
reveal that `A'~(in black) and `C' (green) are close to their nearest neighbors, while `E'~(red) is isolated -- see the length of each first plateau considering the log scale, and note the small lengths for `A' and `C', and the large one for `E'.
The middle plateaus show that: 
\begin{enumerate*}[label=(\roman*)]
\item `A' belongs to a populous, isolate cluster whose cardinality is $77\%$ of the dataset cardinality $\numpoints$ -- see its middle plateau whose length is large, as well as the height that is also large, and;
\item `C' is part of an isolate mc whose cardinality is $0.08\%$ of~$\numpoints$~--~note that its middle plateau has a large length, but a small height.
\end{enumerate*}
`E' does not belong to any nonsingleton cluster due to the absence of a middle plateau.

\paragraph*{\hspace{-5mm} \underline{Gory details: `excused' plateaus and the `Oracle' plot}}
\hfill Provided that we look for microclusters, we propose to \textit{excuse} plateaus of large height; i.e., to ignore clusters of large cardinality, say, larger than a
\maxsizename~$\maxsizedefault$.
See the `Excused' regions in the plots of Fig.~\ref{fig:oracle:plot}(iii).
Also, if any point happens to have two or more middle plateaus that are not excused, we only consider the one with the largest length, because the larger is the length of a plateau, the most isolated is the cluster that it describes.
From now on, if we refer to a plateau, we mean a \textit{nonexcused plateau}; and, if we refer to the middle plateau of a point, we mean this point's nonexcused, middle plateau of \textit{largest length}.

Our `Oracle' plot is then built from the plateaus' lengths.
We plot for each point~$\pointi$ the length~$\fplengthi$ of its first plateau versus the length~$\mplengthi$ of its middle plateau, using $\mplengthi=0$ when $\pointi$ 
has no middle plateau.
Importantly, $\fplengthi$ is approximately\footnote{~The exact distance could be found iff using an infinite set of radii,~which is unfeasible; plateaus would also have to have strictly unaltered neighbor~counts.\label{note:dnn}} the distance between~$\pointi$ and its nearest neighbor. 
Let us refer to $\fplengthi$ as the \fplengthnamebold of~$\pointi$.
On the other hand, $\mplengthi$ is approximately\footnote{~The exact distance could only be found for a point $\pointi$ at the center of the potential mc, and still being subject to the previous footnote's requirements.
} the largest distance between any potential, nonsingleton mc that contains $\pointi$, and the nearest neighbor of this cluster.
Thus, we refer to~$\mplengthi$ as the \mplengthnamebold of~$\pointi$.

Hence, the `X' axis of our `Oracle' plot represents the~possibility of each point to form a cluster of itself, that is, to be a singleton microcluster.
If a point $\pointi$ is far from any other point, then $\pointi$ has a larger $\fplengthi$ than it would have if it were close to another point.
Distinctly, the `Y' axis regards the possibility of each point to be in a nonsingleton microcluster.
If $\pointi$ has~a few close neighbors, i.e., fewer than $\maxsize$ neighbors, but it is far from other points, then $\pointi$ has a larger $\mplengthi$ than it would have if its close neighbors were many, or none.
Provided that $\fplengthi$ and $\mplengthi$ have both the same meaning~--~in the sense that each one is the distance between a potential microcluster $\mcj\suchthat\pointi\in\mcj$ and the cluster's nearest neighbor~--~they can be compared to a threshold~$\mindist$ to verify if $\pointi$ is an outlier; i.e., 
if either $\fplengthi$ or $\mplengthi$ is larger than or equal to~$\mindist$.
For instance, note in~Fig.~\ref{fig:oracle:plot}(ii) that a threshold $\mindist$ distinguishes outliers and inliers.
From now on, we refer to threshold~$\mindist$ as the \mindistnamebold.
\method obtains $\mindist$ automatically, in a data-driven way, as we show later.

\begin{algorithm}[ht]
{
\caption{$\methoddef$}
\label{algo:main}
\raggedright
\Input Dataset $\dataset=\left\{\point{1},\dots \point{\numpoints}\right\}$; \\
        \hspace*{8.5mm} \numberradiiname $\numberradii$; \SideComment{\textbf{Default}: $\numberradiidefault$} \\
        \hspace*{8.5mm} \maxslopename $\maxslope$; \SideComment{\textbf{Default}: $\maxslopedefault$} \\
        \hspace*{8.5mm} \maxsizenameshort $\maxsize$; \SideComment{\textbf{Default}: $\maxsizedefault$} \\        
\Output Microclusters $\mcs=\left\{\mc{1}, \dots \mc{\nummcs}\right\}$; \\
        \hspace*{11mm} Scores per microcluster $\scores = \left\{\score{1}, \dots \score{\nummcs}\right\}$; \\
        \hspace*{11mm} Scores per point $\scoresp = \left\{\scorep{1}, \dots \scorep{\numpoints}\right\}$;
\begin{algorithmic}[1]
\LineComment{Define the neighborhood radii}
\State Build tree $\tree$ for $\dataset$;  \SideCommentPos{-0.5mm}{Like a Slim-tree, M-tree, or R-tree\footref{note:tree}}
\State Estimate diameter $\diamdataset$ of $\dataset$ from $\tree$;
\State $\radii = \left\{\radius{1}, \radius{2}, \dots \radius{\numberradii} \right\} = \left\{\frac{\diamdataset}{2^{\numberradii-1}},~\frac{\diamdataset}{2^{\numberradii-2}},~\dots~\frac{\diamdataset}{2^0}\right\}$;
\LineComment{Build the `Oracle' plot}
\State $\buildopcall$; \SideComment{Alg.~\ref{algo:build_o_plot}}
\LineComment{Spot the microclusters}
\LineSecComment{$\mcs = \{\mc{1},\dots \mc{\nummcs}\}$, where $\mcj \subset \dataset$}
\State $\spotmcscall$; \SideComment{Alg.~\ref{algo:spot:mcs}}
\LineComment{Compute the anomaly scores}
\LineSecComment{$\scores = \left\{\score{1}, \dots \score{\nummcs}\right\}$, where $\scorej$ refers to $\mcj$}
\LineSecComment{$\scoresp = \left\{\scorep{1}, \dots \scorep{\numpoints}\right\}$, where $\scorepi$ refers to $\pointi$}
\State $\scoremcscall$; \SideComment{Alg.~\ref{algo:scoring:mcs}}
\State \Return $\mcs, \scores, \scoresp$;
\end{algorithmic}
}
\end{algorithm}

\vspace{-1.25mm}
\subsection{\method in a Nutshell}
\vspace{-0.75mm}
\method is shown in Alg.~\ref{algo:main}.
Following the problem state-

\noindent ment from Probl.~\ref{def:problem},
it receives a dataset $\dataset = \left\{\point{1}, \dots \point{\numpoints} \right\}$ as input, and returns a set of microclusters $\mcs = \left\{\mc{1}, \dots \mc{\nummcs} \right\}$ together with their anomaly scores $\scores = \left\{\score{1}, \dots \score{\nummcs}\right\}$.
For applications that require a full ranking of the points (as well as for backward compatibility with previous methods),
\method also returns a set of scores per point $\scoresp = \left\{\scorep{1}, \dots \scorep{\numpoints} \right\}$, where $\scorepi \in \reals_{>0}$ is the score of point $\pointi$.

Our method has hyperparameters, for which we provide reasonable default values:
\numberradiiname~$\numberradiidefinition$;
\maxslopename~$\maxslopedefinition$, and;
\maxsizename $\maxsizedefinition$.
The default values $\numberradiidefault$, $\maxslopedefault$, and $\maxsizedefault$ were used in \textit{every} experiment reported in our paper\footnote{~Except for the experiments reported in Sec.~\ref{sec:sensitivity}, where we explicitly test the sensitivity to distinct hyperparametrization.\label{note:sensitivity}}.
It confirms \method's ability to be fully automatic.

In a nutshell, \method has four steps:
\begin{enumerate*}[label=(\Roman*)]
\item define neighborhood radii~--~see Lines~$1$-$3$ in Alg.~\ref{algo:main};
\item build `Oracle' plot~--~Line~$4$;
\item spot microclusters~--~Line~$5$, and;
\item compute anomaly scores~--~Line~$6$.
\end{enumerate*}
Step~I is straightforward.
We build a tree~$\tree$ for~$\dataset$, like an R-tree, M-tree, or Slim-tree\footnote{~M-trees and Slim-trees are for non-vector data; R-trees for disk-based vector data, and kd-trees for main-memory-based vector data. \label{note:tree}}.
Then, we estimate the diameter~$\diamdataset$ of $\dataset$ as the maximum distance between any child node (direct successor) of the root node of~$\tree$.
Finally, we define the set of radii~$\radii=\left\{\radius{1}, \dots \radius{\numberradii}\right\} = \left\{\frac{\diamdataset}{2^{\numberradii-1}},~\frac{\diamdataset}{2^{\numberradii-2}},~\dots~\frac{\diamdataset}{2^0}\right\}$ to be used when counting neighbors.
The next subsections detail Steps~II,~III, and~IV.

\vspace{-2.5mm}
\subsection{Build the `Oracle' Plot}
\vspace{-1.5mm}
Alg.~\ref{algo:build_o_plot} builds the `Oracle' plot.
It starts by counting neigh-

\noindent bors for each point w.r.t. a few radii of neighborhood -- see Lines~$1$-$3$.
Specifically, for each radius $\radiuse\in\radii$, we run a spatial self join algorithm $\selfjoinccallnoreturn{~}$ to obtain a set $\left\{\nn{e}{1}, \dots \nn{e}{\numpoints}\right\}$, where $\nnei$ is the count of neighbors (+ self) of~$\pointi$ considering radius $\radiuse$.
Any off-the-shelf spatial join algorithm can be used here.
However:
\begin{enumerate*}[label=(\roman*)]
\item the algorithm must be adapted to return only counts of neighbors, not pairs of neighboring points, and;
\item we consider the use of any join algorithm that can leverage a tree~$\tree$ to speed up the process.
\end{enumerate*}

Later, we use $\maxslope$, $\maxsize$, and the counts $\left\{\nn{1}{i},\dots \nn{\numberradii}{i}\right\}$ of each point~$\pointi$ to compute both the length $\fplengthi$ of its first plateau, i.e., the \fplengthname, and the length $\mplengthi$ of its middle plateau, i.e., the \mplengthname~--~see Lines~$4$-$7$.
The details are in Def.ns~\ref{def:plateau}-\ref{def:middleplateau}.
Note that we use $\fplengthi=0$ for every point $\pointi$ such that $\nn{1}{i}>1$, because, in these cases, the number of radii~$\numberradii$ is not large enough to uncover the first plateau of that particular point.
Similarly, we use $\mplengthi = 0$ for every point $\pointi$ that does not have a middle plateau.
The plateau lengths are then employed in Lines~$8$-$10$ to mount the `Oracle' plot $\op=\left( \left\{ \fplength{1}, \dots \fplength{\numpoints} \right\}, \left\{ \mplength{1}, \dots \mplength{\numpoints} \right\}\right)$, which is returned in Line~$11$.

\vspace{1mm}
\begin{definition}[\textbf{Plateau}]\label{def:plateau}
A plateau $\plateaui = \left[\radiuse,\radiuseprime\right]$ of a point $\pointi$ is a maximal range of radii where the count of neighbors of $\pointi$ remains unaltered, or quasi-unaltered according to a \maxslopename $\maxslope$. 
Formally, $\left[\radiuse,\radiuseprime\right]$ is a plateau of $\pointi$ if and only if $\radiuse,\radiuseprime\in\radii$ with $\radiuse < \radiuseprime$, and the slope
\vspace{-1mm}
\begin{equation*}
\slopecallnoreturn{e'',~i}=\frac{\text{log}\left(\nn{e'' + 1}{i}\right) - \text{log}\left(\nn{e''}{i}\right)}{\text{log}\left(\radius{e'' + 1}\right) - \text{log}\left(\radius{e''}\right)}
\end{equation*}
\vspace{-1mm}
\noindent is smaller than or equal to~$\maxslope$ for every value $e''$, such that $\radius{e''}\in\radii$ and $e \leq e'' < e'$.
Also, it must be true that:
$\slopecallnoreturn{e-1,~i}>\maxslope$, if $e>1$, and;
$\slopecallnoreturn{e',~i}>\maxslope$, if $e'<\numberradii$.
The~\textbf{length} of a plateau $\plateaui = \left[\radiuse,\radiuseprime\right]$ is given by $\radiuseprime-\radiuse$; its \textbf{height} is $\nnei$,
which must be smaller than, or equal to a \maxsizename~$\maxsize$.
\end{definition}

\vspace{1mm}
\begin{definition}[\textbf{First Plateau}]\label{def:firstplateau}
\hfill Among \hfill every \hfill plateau \\
$\plateaui~=~\left[\radiuse,\radiuseprime\right]$ of a point $\pointi$, 
the first plateau is the only one 
that has \textbf{height one}, that is, $\nnei==1$.
Let us use $\fplateaui$ to refer specifically to the first plateau of~$\pointi$.
Similarly, we use $\fplengthi$ to denote the length of~$\fplateaui$, which is the \fplengthnamebold~of~$\pointi$.
\end{definition}

\vspace{1mm}
\begin{definition}[\textbf{Middle Plateau}]\label{def:middleplateau}
Among every plateau $\plateaui=\left[\radiuse,\radiuseprime\right]$ of~$\pointi$ such that $\radiuseprime\neq\diamdataset$ and $\nnei>1$,
the middle plateau is the one with the \textbf{largest length}~$\radiuseprime-\radiuse$.
We use $\mplateaui$ to refer 
to the middle plateau of~$\pointi$.
Similarly, we use $\mplengthi$ to denote the length of~$\mplateaui$, which is the \mplengthnamebold of~$\pointi$.
\end{definition}

\vspace{-1mm}
\subsection{Spot the Microclusters}

Once the `Oracle' plot is built, how to
\begin{enumerate*}[label=(\roman*)]
\item spot the outliers,
and~then
\item group the ones that are nearby each other?
\end{enumerate*}
Alg.~\ref{algo:spot:mcs} gives~the answers. It has two steps:
to compute the \mindistname~$\mindist$
so to distinguish outliers from inliers 
as shown in Fig.~\ref{fig:oracle:plot}(ii);
and~then, to gel the outliers into mcs, that is, to assign each outlying point to the correct cluster.
The details are as follows.

\begin{algorithm}[t]
{
\caption{$\buildopdef$}
\label{algo:build_o_plot}
\raggedright
\Input Dataset $\dataset=\left\{\point{1},\dots \point{\numpoints}\right\}$; \\
        \hspace*{8.5mm} Tree $\tree$; \\
        \hspace*{8.5mm} Radii $\radii = \left\{\radius{1}, \dots \radius{\numberradii}\right\}$; \\
        \hspace*{8.5mm} \maxslopename $\maxslope$; \\
        \hspace*{8.5mm} \maxsizename $\maxsize$; \\        
\Output `Oracle' plot $\op=\left( \left\{ \fplength{1}, \dots \fplength{\numpoints} \right\} ,\left\{ \mplength{1}, \dots \mplength{\numpoints} \right\}\right)$;
\begin{algorithmic}[1]
\LineComment{Count the neighbors}
\LineSecComment{$\nnei= $ \# neighbors (+ self) of $\pointi$ regarding radius $\radiuse$}
\For{$e = 1, \dots \numberradii$} \SideComment{Run a join per radius}
    \State $\selfjoinccall{\left\{\nn{e}{1},\dots \nn{e}{\numpoints}\right\}}{\tree, \radiuse}$;
\EndFor
\LineComment{Find the plateaus}
\For{$\pointi \in \dataset$}
    \LineSecCommentPos{9mm}{Compute the \fplengthname}
    \State $\fplengthi = $ use $\maxslope$, $\maxsize$, and $\left\{\nn{1}{i},\dots \nn{\numberradii}{i}\right\}$ to compute 
    \Statex \hspace*{12.5mm} the length of the first plateau of $\pointi$; \SideComment{Def.~\ref{def:firstplateau}}
    \LineSecComment{Compute the \mplengthname}
    \State $\mplengthi = $ use $\maxslope$, $\maxsize$, and $\left\{\nn{1}{i},\dots \nn{\numberradii}{i}\right\}$ to compute
    \Statex \hspace*{12mm} the length of the middle plateau of $\pointi$; \SideComment{Def.~\ref{def:middleplateau}}
\EndFor
\LineComment{Mount the `Oracle' plot}
\State $\fplengths = \left\{\fplength{1},\dots \fplength{\numpoints}\right\}$; \SideComment{`X' axis}
\State $\mplengths = \left\{\mplength{1},\dots \mplength{\numpoints}\right\}$; \SideComment{`Y' axis}
\State $\op = \left(~\fplengths,~\mplengths~\right)$; \SideComment{`Oracle' plot}
\State \Return $\op$;
\end{algorithmic}
}
\end{algorithm}

\begin{algorithm}[ht]
{
\caption{$\spotmcsdef$}
\label{algo:spot:mcs}
\raggedright
\Input Dataset $\dataset=\left\{\point{1},\dots \point{\numpoints}\right\}$; \\
        \hspace*{8.5mm} `Oracle' plot $\op=\left( \left\{ \fplength{1}, \dots \fplength{\numpoints} \right\} ,\left\{ \mplength{1}, \dots \mplength{\numpoints} \right\}\right)$; \\
        \hspace*{8.5mm} Radii $\radii = \left\{\radius{1}, \dots \radius{\numberradii}\right\}$; \\
\Output Microclusters $\mcs= \left\{\mc{1},\dots \mc{\nummcs}\right\}$;
\begin{algorithmic}[1]
\LineComment{Compute the \mindistname $\mindist$}
\LineSecComment{Build the \histname $\hist$}
\State $\hist=\left\{\histbin{1},\dots \histbin{\numberradii} \right\} = \left\{0,\dots 0\right\}$;
\For{$\pointi \in \dataset$}
    \State $e=e'\in\left\{1, \dots \numberradii\right\}~\suchthat~\fplengthi == \radiuseprime$; \SideComment{Find bin}
    \State $\histbine = \histbine + 1$; \SideComment{Count point $\pointi$ in bin $\histbine$}
\EndFor
\LineSecComment{\textbf{Data-driven} computation of $\mindist$}
\State $\mindist = $ use $\hist$ and $\radii$ to compute the \mindistname; \SideComment{Def.~\ref{def:mindist}}
\LineComment{Gel the outliers into microclusters}
\State $\outliers=\left\{\pointi\in\dataset\suchthat\fplengthi\geq\mindist\lor\mplengthi\geq\mindist\right\}$; \SideComment{All outliers}
\LineSecComment{Gel nonsingleton microclusters}
\State $\nodes = \{\pointi \in \outliers\suchthat\mplengthi \geq \mindist\}$; \SideComment{Large \mplengthname}
\State Build tree $\tree$ for $\nodes$;
\State $\fplengthmax~=~$max$_{i}~\fplength{i}~\suchthat~\pointi \in \nodes$; \SideComment{Largest \fplengthname}
\State $e~=~e'\in\left\{1, \dots \numberradii\right\}~\suchthat~\radius{e'}==\fplengthmax$;
\State $\selfjoincall{\edges}{\tree, \radius{e+1}}$; \SideComment{Find neighbors}
\State $\graph~=~\left(~\nodes,~\edges~\right)$; \SideComment{Build neighborhood graph}
\State $\component{1},\dots \component{\nummcs'} =$ connected components of $\graph$, where 
\Statex \hspace*{15mm} $\componentj = \left(\nodescomponentj\subseteq\nodes,\edgescomponentj\subseteq\edges\right)$; \SideComment{Find components}
\State $\mcs = \{\mc{1},\dots \mc{\nummcs'}\}$; \SideComment{Nonsingleton mcs}
\LineSecComment{Gel singleton microclusters}
\For{$\pointi \in \outliers \setminus \nodes$}
        \State $\mcs = \mcs~\cup~\left\{\pointi\right\}$; \SideComment{Add singleton mc}
\EndFor
\State \Return $\mcs$;
\end{algorithmic}
}
\end{algorithm}

\subsubsection{\underline{Compute the \mindistname}}

How to let the data dictate the correct \mindistname? 
Ideally, we want a method which is hands-off without requiring any parameters. The first solution that comes to mind is $k$ standard deviations with $k$ equals $3$. 
Can we get rid of the $k$ parameter too?

\paragraph*{\hspace{-5mm} \underline{\mindistname comes from compression: insight}}
Our insight is to use Occam’s razor~\cite{grunwald2004tutorial} and formally the Minimum Description Length (MDL)~\cite{Rissanen1982}. 
MDL is a powerful way of regularizing and eliminating the need for parameters. The idea is to choose those parameter values that result in the best compression of the given dataset.
It is a well respected principle made popular by Jorma Rissanen~\cite{Rissanen1982}, Peter Gr\"unwald~\cite{grunwald2004tutorial} and others.

We compute the \mindistname~$\mindist$
by capitalizing on the set $\left\{\fplength{1}, \dots \fplength{\numpoints}\right\}$ of \fplengthsname;
that is, by leveraging the 
`X' axis of our `Oracle' plot\footnote{~Intuitively, it would be equivalent to get $\mindist$ by using the plot's `Y'~axis, i.e., $\left\{\mplength{1}, \dots \mplength{\numpoints}\right\}$. 
The `X' axis is chosen simply because we must pick an~option.}.
Importantly, it is expected that many points have small 
values in this axis,
while only a few points have larger values.
The small values
come mostly from inliers, such as the black point `A' in Fig~\ref{fig:oracle:plot}(i), but a few ones may come from outliers in the core of a non-singleton microcluster, like point `C' in green, because these points also have close neighbors.
Distinctly, larger values derive exclusively from outliers, e.g., `B'~(in orange), `D'~(violet)~and `E'~(red).
It allows us to compute~$\mindist$ in a data-driven way, by partitioning a histogram of \fplengthsname so to best separate the tall bins that refer to small distances from the short bins that regard large distances.
Intuitively, \mindistname~$\mindist$ is the minimum distance required between one microcluster and its nearest~inlier.

Fig.~\ref{fig:minimum:cost} shows how we compute~$\mindist$.
At the bottom, it presents the `X' axis projection of the `Oracle' plot 
shown in Fig.~\ref{fig:oracle:plot}(ii).
Note that the positioning of points `A'~(in black), `B'~(orange), `C'~(green), `D'~(violet) and `E'~(red) reflects the discussion of the previous paragraph.
The corresponding histogram of points is shown in the central part of Fig.~\ref{fig:minimum:cost}.
We refer to it as the \histnamebold.
As expected, the majority of the points is counted in bins referring to small distances in the histogram -- see the tall bins on its left side.
The \mindistname~$\mindist$ is the distance that best separates 
the tall bins from the short ones, where the former refer to small distances and the latter regard large distances.
We obtain it automatically from the data, by partitioning the \histname so to minimize the cost of compressing the partitions -- see the top part of Fig.~\ref{fig:minimum:cost}.
Besides being parameter-free, our solution is grounded in the same concept of compression used later to generate scores, and, thus, it increases the coherence of our~method.

\begin{figure}[ht]
    \centering
    \includegraphics[width=0.7\linewidth]{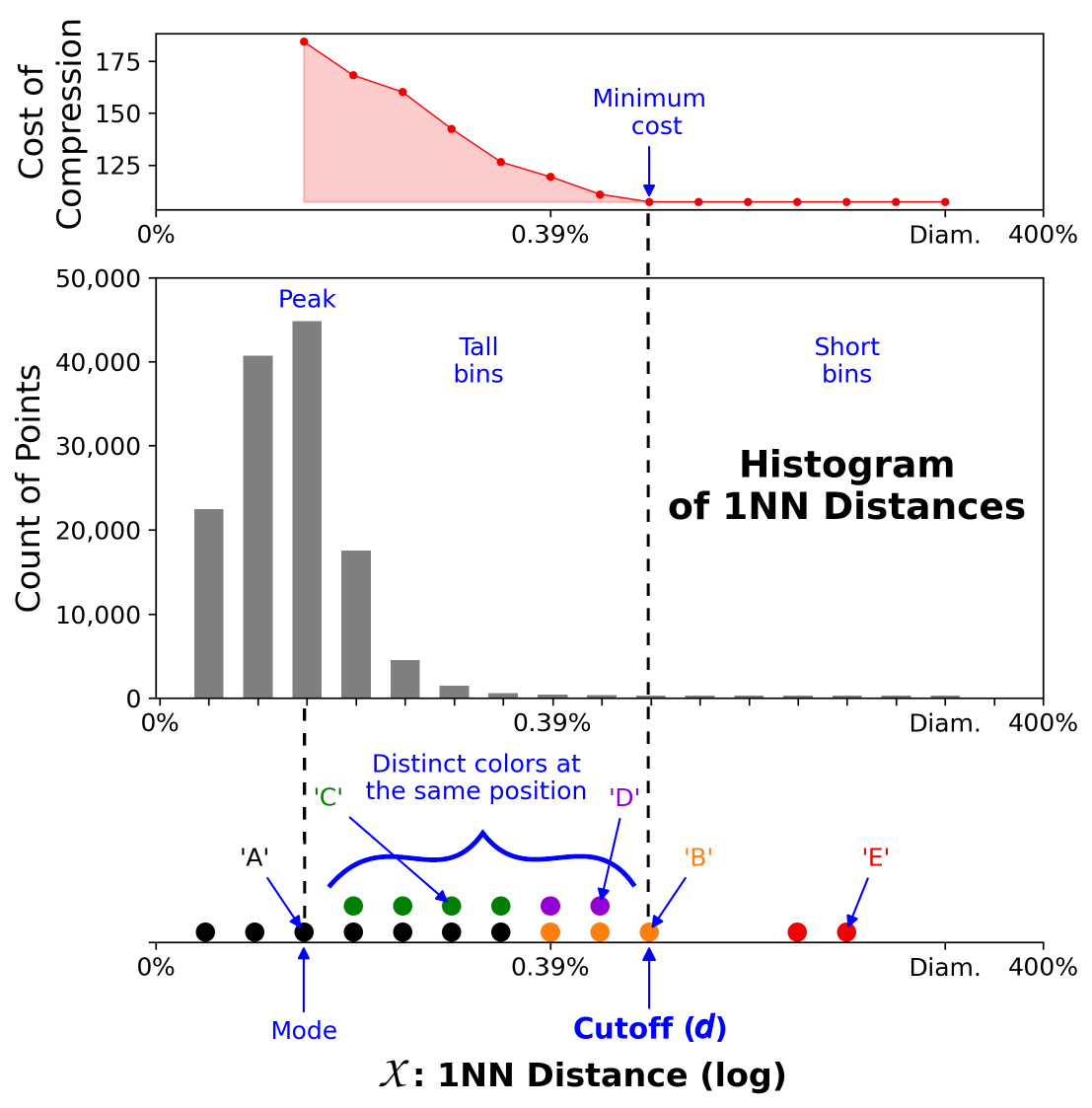}
    \vspace{-2.5mm}
    \caption{\textbf{\method obtains the \mindistnamebold $\mathbf{\mindist}$ automatically}, by partitioning a histogram of \fplengthsname so to best separate tall and short bins.
    It is done by minimizing the cost of compressing the partitions. (best viewed in color)
    \label{fig:minimum:cost}
    }
\end{figure}

\paragraph*{\hspace{-5mm} \underline{\mindistname \hfill comes \hfill from \hfill compression: \hfill details}} \hfill
Here \hfill we \hfill provide 

\noindent the details of computing~$\mindist$.
As shown in Lines~$1$-$5$ of Alg.~\ref{algo:spot:mcs}, we follow Def.~\ref{def:histogram} to build the \histname $\hist=\left\{\histbin{1},\dots \histbin{\numberradii} \right\}$.
Then, in Line~$6$, we use Def.ns~\ref{def:cost}-\ref{def:mindist} to obtain $\mindist$.
Specifically, we find the peak bin~$\histbineprime$.
It regards the distance/radius~$\radiuseprime$ most commonly seen between a point and its nearest neighbor.
Obviously, $\mindist$ must be larger than $\radiuseprime$.
Hence, we compute $\mindist$ considering only the bins in $\left\{\histbineprime,\dots \histbin{\numberradii} \right\}$.
They are analyzed to find the best cut position $e$
that maximizes the homogeneity of values in subsets $\left\{\histbin{e'},\ldots \histbin{e-1}\right\}$~and $\left\{\histbin{e},\ldots \histbin{\numberradii}\right\}$, so to best separate the tall from the short bins.

\vspace{1mm}
\begin{definition}[\histnamebold]\label{def:histogram} \ \ The \histname 
is defined as a set $\hist~=~\left\{\histbin{1},\dots \histbin{\numberradii} \right\}$, in which each bin $\histbine$ is computed as follows:
$\histbine~=~\mid\left\{ \pointi \in \dataset~\suchthat~\fplengthi == \radiuse \right\}\mid$.
\end{definition}

\vspace{1mm}
\begin{definition}[\textbf{\costcallname}]\label{def:cost}
The \MakeLowercase{\costcallname} of a nonempty set $\costset\subset\naturals_{\geq 0}$
with average $\costsetavg$ is
\begin{align*}
\costcallnoreturn{\costset}=&\bigl<\mid\costset\mid\bigr>+\biggl<1+\left \lceil{\costsetavg} \right \rceil \biggr>+ 
        \sum_{\costsetvalue\in\costset} \biggl<1+\left \lceil{\mid\costsetvalue - \costsetavg\mid}\right \rceil\biggr>
\end{align*}
\noindent , where 
$\bigl<~\bigr>$ is the universal code length for integers\footnote{~It can be shown that $\bigl<z\bigr> \approx \text{log}_2\left(z\right) + \text{log}_2\left(\text{log}_2\left(z\right)\right)+\dots$, where $z\in\naturals_{\geq 1}$ and only the positive terms of the equation are retained~\cite{DBLP:conf/kdd/ChakrabartiPMF04}. This is the optimal length, if we do not know the range of values for $z$ beforehand.\label{note:logstar}}.
\end{definition}

\vspace{1mm}
\begin{definition}[\mindistnamebold]\label{def:mindist}
The \mindistname $\mindist$ is defined as
\begin{align*}
\mindist~=~\radiuse \in \radii~\suchthat~e~\text{minimizes}~&\costcallnoreturn{\left\{\histbineprime,\ldots \histbin{e-1}\right\}} + \\ &\costcallnoreturn{\left\{\histbin{e},\ldots \histbin{\numberradii}\right\}}
\end{align*}
\noindent, where $\left\{\histbineprime,\ldots \histbin{e-1}\right\}$ and $\left\{\histbin{e},\ldots \histbin{\numberradii}\right\}$ are subsets of $\hist$, and $e'$ is chosen so that radius~$\radiuseprime\in\radii$ is the mode of $\left\{\fplength{1}, \dots \fplength{\numpoints}\right\}$.
\end{definition}
\vspace{1mm}

To obtain $e$, based on the principle of MDL, 
we check partitions $\left\{\histbineprime,\ldots \histbin{e''-1}\right\}$ and $\left\{\histbin{e''},\ldots \histbin{\numberradii}\right\}$ for all possible cut positions~$e''$.
The idea is to compress each possible partition, representing it by its cardinality, its average, and the differences of each of its values to the average. 
A partition with high homogeneity of values allows good compression, as its differences to the average are small, and small numbers need less bits to be represented than large ones do.
The best cut position $e$ is, therefore, the one that creates the partitions that compress best.
Note in Def.~\ref{def:cost} that we add ones to some values whose code lengths $\bigl<~\bigr>$ are required, so to account for zeros. 
\mindistname $\mindist$ is then obtained as $\mindist=\radiuse$, without depending on any input from the user.
It allows us to identify the set $\outliers=\left\{\pointi\in\dataset~\suchthat~\fplengthi\geq\mindist~\lor~\mplengthi\geq\mindist\right\}$ of all outliers.

\vspace{1mm}
\subsubsection{\underline{Gel the outliers into microclusters}}

Given the outliers, how to cluster them?
Alg.~\ref{algo:spot:mcs} provides the answer.
We use the `Y' axis of the `Oracle' plot to isolate outliers of nonsingleton mcs into a set~$\nodes=\{\pointi\in\outliers\suchthat\mplengthi\geq\mindist\}$; see Line~$8$. 
Then, in Lines~$9$-$15$, we group these points using the plot's `X' axis.
Every outlier in $\nodes$ must be grouped together with its nearest neighbor; thus, we identify the largest \fplengthname $\fplengthmax~=~\text{max}_{i}~\fplength{i}\suchthat\pointi\in\nodes$, and use it to specify a threshold that rules if each possible pair of points from $\nodes$ is close enough to be grouped together.
Provided that the \fplengthsname are approximations, the threshold itself is the smallest radius larger than $\fplengthmax$, that is, radius~$\radius{e+1}$ from Line~$12$.
It avoids having a point and its nearest neighbor in distinct clusters.

The nonsingleton mcs are then identified by spotting connected components in a graph~$\graph~=~\left(~\nodes,~\edges~\right)$, where $\nodes$ is the set of nodes, and $\edges\subseteq\nodes\cartprod\nodes$ is the set of edges. The edges are obtained from any off-the-shelf spatial self join algorithm $\selfjoincallnoreturn{~}$ that returns pairs of nearby points from $\nodes$ -- see Line~$12$.
Lastly, in Lines~$16$-$19$, we recognize each outlier in $\outliers\setminus\nodes$ as a cluster of itself, and return the final set of mcs~$\mcs$.

\subsection{Compute the Anomaly Scores}

Given the mcs, how to get scores that obey our axioms?

\paragraph*{\hspace{-5mm} \underline{Scores come from compression: insight}}
We quantify the anomalousness of each mc according to 
how much it can be compressed when described in terms of the nearest inlier.
Fig.~\ref{fig:score} depicts~this idea.
To describe a microcluster $\mcj$
we would store its cardinality $\mid\mcj\mid$ and 
the identifier $i\in\left\{1,2,\dots \numpoints\right\}$ of the nearest inlier $\pointi$.
See Items~$\raisebox{-2pt}{\text{\textcolor{red}{\Large \ding{172}}}}$ and $\raisebox{-2pt}{\text{\textcolor{red}{\Large \ding{173}}}}$ in Fig.~\ref{fig:score}.
Then, we would use $\pointi$ as a reference to
describe the point $\pointiprime\in\mcj$ that is the closest to it.
To this end, we would store the differences (e.g., in each feature if we have vector data) between~$\pointi$~and~$\pointiprime$; see~$\raisebox{-2pt}{\text{\textcolor{red}{\Large \ding{174}}}}$.
Point~$\pointiprime$ would in turn be the reference to describe its nearest neighbor $\pointiprimeprime\in\mcj$, which would later serve as a reference to describe one other close neighbor from $\mcj$, thus following a recursive process that would lead us to describe every point of $\mcj$; see~$\raisebox{-2pt}{\text{\textcolor{red}{\Large \ding{175}}}}$.
Importantly,
in this representation,
the cost per point -- that is, the total number of bits used to describe $\mcj$ divided by~$\mid\hspace{-1mm}\mcj\hspace{-1mm}\mid$ -- reflects the axioms of Fig.~\ref{fig:axioms}.
A large \degreeofisolationname increases the cost per point due to~$\raisebox{-2pt}{\text{\textcolor{red}{\Large \ding{174}}}}$.
Also, the larger the cardinality, the smaller the cost per point. It is because the costs of~$\raisebox{-2pt}{\text{\textcolor{red}{\Large \ding{172}}}}$, 
$\raisebox{-2pt}{\text{\textcolor{red}{\Large \ding{173}}}}$ and~$\raisebox{-2pt}{\text{\textcolor{red}{\Large \ding{174}}}}$
are diluted with points.
Hence, the cost per point appropriately quantifies the anomalousness of~$\mcj$.

\paragraph*{\hspace{-5mm} \underline{Scores come from compression: details}}
Alg.~\ref{algo:scoring:mcs} computes the scores.
We begin by finding 
the \degreeofisolationname~$\degreeofisolationj$
of each~$\mcj$.
To this end, we compute the distance $\dnii$ between each outlier $\pointi\in\outliers$ and its nearest inlier; see Lines~$1\text{-}12$.
Specifically, we run a join between $\outliers$ and $\dataset\setminus\outliers$ for each $\radiuse\in\radii$.
Any spatial join algorithm can be used here, but it must be adapted to return counts of neighbors, not pairs of points.
Each $\dnii$ is then the largest radius for which $\pointi$ has zero neighbors. 
The \degreeofisolationname~$\degreeofisolationj$ is finally found for each $\mcj$ in Line~$17$; 
it is the smallest distance $\dnii\suchthat\pointi\in\mcj$.

\begin{figure}[t]
    \centering
    \includegraphics[width=0.5\linewidth]{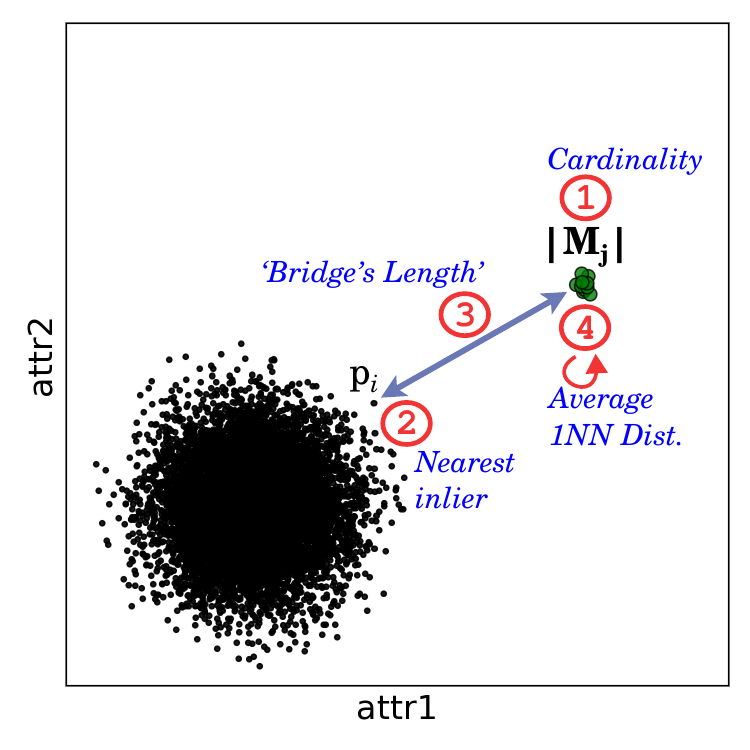}
    \vspace{-2.5mm}
    \caption{\textbf{\method's scores} quantify how much each mc $\mcj$ is compressed when it is described in terms of the nearest inlier $\pointi$. (best viewed in~color)
    \label{fig:score}
    }
\end{figure}

\vspace{1mm}
\begin{definition}[\textbf{\scorename}]\label{def:score}
The score $\scorej$ of $\mcj$ is defined as
\begin{align*}
\scorej~=~\frac{\text{\textcolor{red}{\Large \ding{172}}}~~\raisebox{2pt}{$+$}~~\text{\textcolor{red}{\Large \ding{173}}}~~\raisebox{2pt}{$+$}~~\text{\textcolor{red}{\Large \ding{174}}}~~\raisebox{2pt}{$+~~\left(\mid\mcj\mid-1\right)\mult$}~\text{\textcolor{red}{\Large \ding{175}}}}{\mid\mcj\mid}
\end{align*}
\noindent where
\begin{align*}
\raisebox{-2pt}{\text{\textcolor{red}{\Large \ding{172}}}}~&=~\bigl<~\mid\mcj\mid~\bigr>\text{\SideCommentPos{42mm}{Cardinality}}\\
\raisebox{-2pt}{\text{\textcolor{red}{\Large \ding{173}}}}~&=~\bigl<~\numpoints~\bigr>\text{\SideCommentPos{47mm}{Nearest inlier}}\\
\raisebox{-2pt}{\text{\textcolor{red}{\Large \ding{174}}}}~&=~\transformationcost~\mult~\Biggl<~\frac{
\degreeofisolationj
}{\radius{1}}~\Biggr>\text{\SideCommentPos{29mm}{`Bridge's Length'}}\\
\raisebox{-2pt}{\text{\textcolor{red}{\Large \ding{175}}}}~&=~\transformationcost~\mult~\Biggl<~1~+~\left \lceil{\frac{
\fplengthavgj
}{\radius{1}}}\right\rceil~\Biggr>\text{\SideCommentPos{6.5mm}{Average \fplengthname}}
\end{align*}
\noindent with $\fplengthavgj=\text{avg}_{i}~\fplengthi\suchthat\pointi\in\mcj$. The~\transformationcostnamebold~$\transformationcost$~is
\begin{equation*}
    \transformationcost=
    \begin{cases}
        \text{Data dimensionality} &\text{\hspace{-4mm}, if \textbf{vector space}} 
        \vspace{0.1in}
        \\
        \bigl<3\bigr>+\bigl<\text{\# distinct chars.}\bigr> &\text{\hspace{-3.75mm}, if \textbf{words using}}
        \\
        +~\bigl<\text{\# chars. longest word}\bigr> &\text{\hspace{-1.75mm}\textbf{edit distance}}
        \vspace{0.1in}
        \\
        \text{Cost to transform a point into another} &\text{\hspace{-2.5mm}, if \textbf{other space}}
        \\
        \text{point that is one unit of distance away} &
    \end{cases}
\end{equation*}
\noindent and $\bigl<~\bigr>$ is the universal code length for integers\footref{note:logstar}.
\end{definition}
\vspace{1mm}

After that, we follow suit with the idea of Fig.~\ref{fig:score} and compute the scores using Def.~\ref{def:score}.
Item~$\raisebox{-2pt}{\text{\textcolor{red}{\Large \ding{172}}}}$ in Def.~\ref{def:score} is the cost of storing the cardinality $\mid\hspace{-1.25mm}\mcj\hspace{-1.25mm}\mid$.
Item~$\raisebox{-2pt}{\text{\textcolor{red}{\Large \ding{173}}}}$ is the cost of storing
the identifier $i\in\left\{1,2,\dots\numpoints\right\}$ of the nearest inlier~$\pointi$. 
We always consider the worst-case scenario here (that is, when $i=\numpoints$) to standardize the cost and also to avoid any dependence on the order of points in~$\dataset$.
Item~$\raisebox{-2pt}{\text{\textcolor{red}{\Large \ding{174}}}}$ reflects the cost of storing the differences between $\pointi$ and the point $\pointiprime\in\mcj$ that is the closest to it.
To work with vector and nondimensional data, we approximate this cost using the distance between $\pointi$ and $\pointiprime$, which is~$\degreeofisolationj$; it is then~normalized using $\radius{1}$. We also use the \transformationcostname~$\transformationcost$ of the metric space; i.e., the cost to transform a point into another point that is one unit of distance away.
For example, in a vector space, $\transformationcost$~is the dimensionality because we must know the differences between $\pointi$ and $\pointiprime$ in each one of the features.
As for another example, when analyzing words using the edit distance, $t$ is the cost of describing:
\begin{enumerate*}[label=(\roman*)]
\item the transformation to be performed, among the three options insertion, deletion and replacement;
\item the new character to be inserted or to replace the existing one, and;
\item the position where the transformation must occur.
\end{enumerate*}
Finally, Item $\raisebox{-2pt}{\text{\textcolor{red}{\Large \ding{175}}}}$ refers to the cost of storing the differences between each one of the remaining $\mid\hspace{-1mm}\mcj\hspace{-1mm}\mid - 1$ points to be described and its point of reference.
The cost is once more approximated using distances to work with any metric data.

For applications requiring a full ranking of the points (and for backward compatibility with other methods), we also have a score $\scorepi$ for each $\pointi$. 
To this end, we follow previous ideas and propose a score representing the cost of describing $\pointi$ in terms of the nearest inlier; 
see Lines~$13$-$15$ and~$21$-$24$~in~Alg.~\ref{algo:scoring:mcs}.

\vspace{-2mm}
\subsection{Time and Space Complexity}
\label{sec:complexity}

\begin{lemma}[\textbf{Time Complexity}]\label{lem:timecomplexity}
The time complexity of \method is $O\left(\numpoints~\mult~\numpoints^{1-\frac{1}{\intdim}}\right)$, where 
$\intdim$ is the intrinsic (correlation fractal) dimension\footnote{~We only need distances to compute the fractal dimension~$\intdim$, which is how quickly the number of neighbors grows with the distance~\cite{DBLP:conf/pods/FaloutsosK94}.
It can be computed even for nondimensional~data, for example, using string-editing distance for strings of last names or tree-editing distance for skeleton-graphs.
Moreover, Traina Jr.\cite{DBLP:journals/tkde/TrainaTFS02} show how to quickly estimate the fractal dimension of a nondimensional dataset requiring subquadratic time.
} of~$\dataset$. 
\end{lemma}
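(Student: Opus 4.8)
The plan is to go phase by phase through \method (Alg.~\ref{algo:main}) and verify that each of its four steps costs at most $O(\numpoints \mult \numpoints^{1-1/\intdim})$, so that their sum --- a \emph{constant} number of such terms --- stays within the claimed bound. The linchpin is a sublemma I would establish first: a single neighbor-count (range) query issued on the metric tree $\tree$ costs $O(\numpoints^{1-1/\intdim})$ on average whenever $\dataset$ behaves as a statistically self-similar set of correlation fractal dimension $\intdim$. This is the classical tree-selectivity estimate for similarity search (footnote reference~\cite{DBLP:conf/pods/FaloutsosK94}; see also Traina Jr.\ et al.~\cite{DBLP:journals/tkde/TrainaTFS02}), and it is precisely where the intrinsic dimension $\intdim$ enters. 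Granting it, a spatial self-join is $\numpoints$ such queries and therefore costs $O(\numpoints \mult \numpoints^{1-1/\intdim})$; this single fact is the engine of the whole argument.

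Step~I (Lines~1--3 of Alg.~\ref{algo:main}): bulk-loading $\tree$ is $O(\numpoints \log \numpoints)$, estimating the diameter $\diamdataset$ from the children of the root is $O(1)$ in the (constant) node fan-out, and writing the $\numberradii$ radii is $O(\numberradii)$; since $\numberradii$ is a constant (default $15$), Step~I is within budget. Step~II, \funcname{BuildOPlot} (Alg.~\ref{algo:build_o_plot}): the loop of Lines~1--3 performs $\numberradii$ self-joins, costing $O(\numberradii \mult \numpoints \mult \numpoints^{1-1/\intdim}) = O(\numpoints \mult \numpoints^{1-1/\intdim})$; reading each point's first and middle plateau from its $\numberradii$ neighbor counts (Def.ns~\ref{def:plateau}--\ref{def:middleplateau}) is $O(\numberradii)$ per point, i.e.\ $O(\numpoints)$ overall; mounting $\op$ is $O(\numpoints)$.

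Step~III, \funcname{SpotMCs} (Alg.~\ref{algo:spot:mcs}): building the \histname is $O(\numpoints)$; computing the \mindistname by MDL (Def.ns~\ref{def:cost}--\ref{def:mindist}) tries $O(\numberradii)$ cut positions, each evaluating \funcname{Cost} on $O(\numberradii)$-sized subsets, hence $O(\numberradii^2) = O(1)$; forming $\outliers$ and $\nodes$ is $O(\numpoints)$; building the tree on $\nodes \subseteq \dataset$ is again $O(\numpoints \log \numpoints)$; the self-join of Line~12 is $O(\numpoints \mult \numpoints^{1-1/\intdim})$; and the connected components of $\graph$ cost $O(|\nodes| + |\edges|)$ with $|\nodes| \leq \numpoints$. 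Step~IV, \funcname{ScoreMCs} (Alg.~\ref{algo:scoring:mcs}): the joins between $\outliers$ and $\dataset \setminus \outliers$ over the $\numberradii$ radii again total $O(\numpoints \mult \numpoints^{1-1/\intdim})$; each distance $\dni{i}$ is then read off in $O(\numberradii)$ time; and the per-microcluster \degreeofisolationname together with the per-microcluster and per-point scores of Def.~\ref{def:score} are single passes, $O(\numpoints)$. Summing the five contributions, the self-joins dominate, there are $O(\numberradii) = O(1)$ of them, and the total is $O(\numpoints \mult \numpoints^{1-1/\intdim})$.

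The hard part is the per-query sublemma: it is an \emph{average-case} statement under the fractal-dimension model of the data (which is exactly what makes tree pruning effective), so strictly the result should be read as data-dependent --- an adversarially arranged dataset forces a range query to $O(\numpoints)$. A secondary gap is the edge count $|\edges|$ in the connected-components step: if many candidates lie mutually within $\radius{e+1}$ then $|\edges|$ could be super-linear, which I would close by bounding the output of the Line~12 self-join by $O(|\nodes| \mult \numpoints^{1-1/\intdim})$, using that $\nodes$ is the small set of microcluster candidates rather than all of $\dataset$. Finally, the $O(\numpoints \log \numpoints)$ tree-construction terms are truly dominated only when $\intdim$ is bounded away from $1$; for $\intdim \to 1$ the honest statement is $O(\numpoints \log \numpoints + \numpoints \mult \numpoints^{1-1/\intdim})$, which the lemma elides.
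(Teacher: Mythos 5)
Your proposal takes essentially the same route as the paper's proof: decompose \method phase by phase, invoke the Pagel--Korn--Faloutsos / Traina~et~al.\ bound of $O\left(\numpoints^{1-1/\intdim}\right)$ per tree-based range query, treat a self-join as $\numpoints$ such queries multiplied by the constant number $\numberradii$ of radii, and conclude that the joins dominate the tree-building and scanning costs. The extra caveats you flag --- the average-case nature of the per-query bound, the residual $O\left(\numpoints\log\numpoints\right)$ tree-construction term for $\intdim\to 1$, and the size of the candidate set in \funcname{SpotMCs}/\funcname{ScoreMCs} --- are present in the paper only implicitly (it treats those two phases as $O(\numpoints)$ on the grounds that $\mid\outliers\mid$ is small and explicitly assumes $\intdim>1$), so there is no substantive divergence, only a somewhat tighter bookkeeping on your part.
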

\begin{proof}
\method is presented in Alg.~\ref{algo:main}.
It builds a tree~$\tree$ for~$\dataset$ in $O\left(\numpoints~\mult~\text{log}\left(\numpoints\right)\right)$ time, and then computes~$\diamdataset$ and $\radii$ in a negligible time. 
Later, it spots microclusters by calling functions
$\buildopdef$ from Alg.~\ref{algo:build_o_plot}, 
$\spotmcsdef$ from Alg.~\ref{algo:spot:mcs} and 
$\scoremcsdef$ from Alg.~\ref{algo:scoring:mcs}, sequentially.
Therefore, the overall cost of \method is the larger between $O\left(\numpoints~\mult~\text{log}\left(\numpoints\right)\right)$ and the costs of Algs.~\ref{algo:build_o_plot},~\ref{algo:spot:mcs} and~\ref{algo:scoring:mcs}.

Alg.~\ref{algo:build_o_plot} counts neighbors by running $\numberradii$ self-joins for~$\dataset$.
Because $\numberradii$ is small, the complexity of this step is the same as that of one single self-join.
The self-join finds neighbors for each of $\numpoints$ points.
A rough, upper-bound 
estimation for its runtime is thus
$O\left(\numpoints \mult \nncallnoreturn{\numpoints}\right)$,
where $\nncallnoreturn{\numpoints}$ is time taken by each nearest neighbor search.
According to Pagel, Korn and Faloutsos~\cite{DBLP:conf/icde/PagelKF00},
$\nncallnoreturn{\numpoints}=O\left(\numpoints^{1-\frac{1}{\intdim}}\right)$, in which $\intdim$ is the intrinsic (correlation fractal) dimension of a vector dataset $\dataset$. 
For nondimensional data, Traina Jr. et al.~\cite{DBLP:journals/tkde/TrainaTFS02} demonstrate that the query cost also depends on $\intdim$.
It leads us to estimate the cost of counting neighbors as $O\left(\numpoints~\mult~\numpoints^{1-\frac{1}{\intdim}}\right)$. 
Later, Alg.~\ref{algo:build_o_plot} finds plateaus in $O\left(\numpoints\right)$ time, and then it mounts 
$\op$ in a negligible time.
Hence, the cost of Alg.~\ref{algo:build_o_plot} is $O\left(\numpoints~\mult~\numpoints^{1-\frac{1}{\intdim}}\right)$.

Algs.~\ref{algo:spot:mcs} and~\ref{algo:scoring:mcs} 
scan $\dataset$ in $O\left(\numpoints\right)$ time. They also process the set of outliers $\outliers$, which takes a negligible time because $\mid\outliers\mid$ is small.
Thus, the time required by each algorithm is~$O\left(\numpoints\right)$.

As the total cost is the larger between $O\left(\numpoints~\mult~\text{log}\left(\numpoints\right)\right)$ and the costs of Algs.~\ref{algo:build_o_plot},~\ref{algo:spot:mcs} and~\ref{algo:scoring:mcs},
it comes to~$O\left(\numpoints~\mult~\numpoints^{1-\frac{1}{\intdim}}\right)$.
We consider $\intdim>1$ because it is the most expected scenario.
$\blacksquare$
\end{proof}

\vspace{1mm}
\paragraph*{\hspace{-5mm} \underline{Note}} In our experience, real data often have fractal dimension $\intdim$ smaller than~$20$~\cite{DBLP:conf/icde/PagelKF00, DBLP:conf/sbbd/TrainaTWF00}.
As shown in~\cite{DBLP:journals/tkde/TrainaTFS02}, several nondimensional data have small $\intdim$.
Thus, \method should be subquadratic on the count of points for most real applications.

\vspace{1mm}
\begin{lemma}[\textbf{Space \hfill Complexity}]\label{lem:spacecomplexity}
\hfill The \hfill space \hfill complexity \hfill of \\ \method is given by $O\left(n\right)$.
\end{lemma}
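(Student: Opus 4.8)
The plan is to bound the peak memory of \method\ (Alg.~\ref{algo:main}) by inspecting its four subroutines and measuring the largest data structure each keeps live. Throughout, the hyperparameters $\numberradii$, $\maxslope$ and $\maxsize$ are treated as constants (their defaults are $\numberradiidefault$, $\maxslopedefault$ and $\maxsizedefault$), so any collection indexed by radii — most notably the radii set $\radii$ and the \histname\ $\hist$ of Alg.~\ref{algo:spot:mcs} — contributes only $O(1)$.

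First I would handle Step~I and the construction of the `Oracle' plot. The tree $\tree$ over $\dataset$ (an R-tree, M-tree, or Slim-tree) stores a constant amount of information per point and has linearly many nodes, hence $O(\numpoints)$ space; $\diamdataset$ and $\radii$ add $O(1)$. In Alg.~\ref{algo:build_o_plot}, for each radius $\radiuse\in\radii$ the neighbor counts $\left\{\nn{e}{1},\dots,\nn{e}{\numpoints}\right\}$ occupy $O(\numpoints)$, and since there are only $\numberradii = O(1)$ radii the total is still $O(\numpoints)$ — here one relies on the standard fact (already implicit in Lem.~\ref{lem:timecomplexity}) that the off-the-shelf self-join uses only linear working space. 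The axis arrays $\fplengths$ and $\mplengths$, and thus $\op$, are each of length $\numpoints$, i.e.\ $O(\numpoints)$; finding the plateaus needs only a constant number of scalars per point.

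Next I would account for Algs.~\ref{algo:spot:mcs} and~\ref{algo:scoring:mcs}. In Alg.~\ref{algo:spot:mcs}: $\hist$ is $O(1)$; the outlier set $\outliers$, its subset $\nodes$, and the auxiliary tree over $\nodes$ are each $O(\numpoints)$; the neighborhood graph $\graph = (\nodes, \edges)$ has $|\nodes| = O(\numpoints)$ nodes and, stored as adjacency lists, $O(|\nodes|^{2})$ edges in the worst case, which reduces to $O(\numpoints)$ under the same ``few outliers'' premise used for the runtime bound; the connected components and the microcluster set $\mcs$ jointly partition a subset of $\dataset$, hence $O(\numpoints)$. In Alg.~\ref{algo:scoring:mcs} one keeps one distance $\dni{i}$ per outlier, one \degreeofisolationname\ $\degreeofisolationj$ and one score $\scorej$ per microcluster (at most $|\outliers| \le \numpoints$ and $\nummcs \le \numpoints$ of them), and the per-point scores $\scoresp = \left\{\scorep{1},\dots,\scorep{\numpoints}\right\}$, all $O(\numpoints)$.

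Taking the maximum over the four steps gives $O(\numpoints)$, as claimed. The only point I expect to require care is the edge set $\edges$ of the neighborhood graph: a pessimistic count is $O(|\nodes|^{2})$, so the clean linear bound leans on the assumption — the same one underpinning Lem.~\ref{lem:timecomplexity} — that the number of outliers is small (equivalently, that the number of close outlier pairs stored in $\edges$ is $O(\numpoints)$). Everything else is a routine structure-by-structure tally.
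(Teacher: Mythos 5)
Your proof is correct and takes essentially the same approach as the paper: tally the space of each data structure, observe that the dominant ones ($\dataset$, $\tree$, $\op$, $\scoresp$) are $O(\numpoints)$, and appeal to the ``small $\mid\outliers\mid$'' premise for everything derived from the outlier set. You are somewhat more thorough than the paper — in particular you explicitly flag the edge set $\edges$ as the structure that could be superlinear in the worst case and note that its linear bound rests on the same small-outlier assumption — but the decomposition and the key premise are the same.
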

\begin{proof}
\method receives a set $\dataset$ as the input, and returns sets $\mcs$, $\scores$ and $\scoresp$ as the output.
The largest data structures employed to this end are the `Oracle' plot $\op$ and a tree~$\tree$ created for $\dataset$ in Alg.~\ref{algo:main}.
$\dataset$, $\scoresp$, $\op$ and $\tree$ require $O\left(\numpoints\right)$ space each.
$\mcs$ and $\scores$ have negligible space requirements, as they require $O\left(\mid\outliers\mid\right)$ space with a small $\mid\outliers\mid$.
Consequently, the space complexity of \method is $O\left(\numpoints\right)$.
$\blacksquare$
\end{proof}

\begin{algorithm}[t] 
{
\caption{$\scoremcsdef$}
\label{algo:scoring:mcs}
\raggedright
\Input Dataset $\dataset=\left\{\point{1},\dots \point{\numpoints}\right\}$; \\
        \hspace*{8.5mm} Microclusters $\mcs=\left\{\mc{1}, \dots \mc{\nummcs}\right\}$; \\ 
        \hspace*{8.5mm} `Oracle' plot $\op=\left( \left\{ \fplength{1}, \dots \fplength{\numpoints} \right\} ,\left\{ \mplength{1}, \dots \mplength{\numpoints} \right\}\right)$; \\
        \hspace*{8.5mm} Radii $\radii = \left\{\radius{1}, \dots \radius{\numberradii}\right\}$; \\
\Output Scores per microcluster $\scores = \left\{\score{1}, \dots \score{\nummcs}\right\}$; \\
        \hspace*{11mm} Scores per point $\scoresp = \left\{\scorep{1}, \dots \scorep{\numpoints}\right\}$;
\begin{algorithmic}[1]
\LineComment{Compute the distances to the nearest inliers}
\LineSecComment{For outliers}
\State $\outliers = \bigcup_{j=1}^{\nummcs} \mc{j}$; \SideComment{All outliers}
\State Build tree $\tree$ for $\outliers$; \SideComment{Tree for outliers}
\State Build tree $\treeprime$ for $\dataset \setminus \outliers$; \SideComment{Tree for inliers}
\For{$e = 1, \dots \numberradii$}
    \LineSecCommentPos{9mm}{$\nninei=$ \# of neighboring inliers of $\pointi$ w.r.t. $\radiuse$}
    \State $\joinccall{\left\{\nnin{e}{i} \suchthat \pointi~\text{in}~\tree \right\}}{\tree,\treeprime, \radiuse}$; \SideComment{Join}
    \For{$\pointi$ in $\tree$}
        \If{$\nninei > 0$}
            \State $\dnii = \radius{e-1}$; \SideComment{$\dnii= $ distance to nearest inlier}
            \State Remove $\pointi$ from $\tree$;
        \EndIf
    \EndFor
\EndFor
\LineSecComment{For inliers}
\For{$\pointi \in \dataset \setminus \outliers$}
    \State $\dnii = \fplengthi$; \SideComment{$\dnii$ is the \fplengthname of $\pointi$}
\EndFor
\LineComment{Compute the scores per microcluster}
\For{$\mcj \in \mcs$}
    \State $\degreeofisolationj=\text{min}_{i}~\dnii\suchthat\pointi\in\mcj$; \SideComment{\degreeofisolationname}
    \State $\scorej = $ use $\mcj$, $\radii$, $\op$, $\degreeofisolationj$ to get score of~$\mcj$; \SideComment{Def.~\ref{def:score}}
\EndFor
\State $\scores = \left\{\score{1},\dots,\score{\nummcs}\right\}$; \SideComment{Set of scores per microcluster}
\LineComment{Compute the scores per point}
\For{$\pointi \in \dataset$}
    \State $\scorepi=\Bigl<1 + \frac{\dnii}{\radius{1}}\Bigr>$; \SideComment{$\bigl<~\bigr>$ is the code length for integers\footref{note:logstar}}
\EndFor
\State $\scoresp = \left\{\scorep{1},\dots,\scorep{\numpoints}\right\}$; \SideComment{Set of scores per point}
\State \Return $\scores, \scoresp$;
\end{algorithmic}
}
\vspace{-1.6mm}
\end{algorithm}

\vspace{-3mm}
\subsection{Implementation} \label{sec:implementation}
\vspace{-2mm}

The main cost of \method regards counting neighbors to build the `Oracle' plot.
As shown in Lines~$1$-$3$ of Alg.~\ref{algo:build_o_plot}, we can count neighbors using a didactic strategy that is easy to follow.
Nevertheless, when using this strategy, we count the neighbors of every point w.r.t. all radii, which is unnecessary because we only need a count $\nnei$ if it is smaller than or equal to~$\maxsize$;
see the `Excused' regions in Fig.~\ref{fig:oracle:plot}(iii).
Our actual implementation follows a sparse-focused, speed-up principle:
\begin{compactitem}
\item \textbf{Sparse-focused principle}:
run a self-join for $\dataset$ with $\radius{1}$. Then, for each $\radiuse\in\left\{\radius{2},\dots\radius{\numberradii}\right\}$, from smallest to largest, run a join (not self) between~$\left\{\pointi\in\dataset~\suchthat~\nn{e-1}{i}\leq\maxsize\right\}$ and $\dataset$, thus computing only the counts required.
\end{compactitem}

Other speed-up principles we employ for joins are:
\begin{compactitem}
\item \textbf{Count-only principle}: do not materialize pairs of neighbors~(e.g., by leveraging `compact similarity joins'~\cite{DBLP:conf/icde/BryanEF08}) 
because we only need counts of neighbors, not the actual pairs of points. It applies to the joins in Algs.~\ref{algo:build_o_plot} and~\ref{algo:scoring:mcs}.
\item \textbf{Using-index principle}: use a tree, like an R-tree, M-tree, or Slim-tree\footref{note:tree}.
It applies to the joins in Algs.~\ref{algo:build_o_plot},~\ref{algo:spot:mcs} and~\ref{algo:scoring:mcs}.
\item \textbf{Small-radii-only principle}: don't run a join for radius~$\radius{\numberradii}$.
Since $\radius{\numberradii}=\diamdataset$, 
we already know all points are neighbors of each other.
It applies to the joins in Algs.~\ref{algo:build_o_plot},~\ref{algo:spot:mcs} and~\ref{algo:scoring:mcs}.
\end{compactitem}

\vspace{-1.5mm}
\section{Experiments}
\label{sec:exp}
\vspace{-1.5mm}
We designed experiments to answer five questions:
\begin{compactenum}[{Q}1.]
\item {\bf \accurate}: How accurate is \method?
\item {\bf \principled}: Does \method obey axioms?
\item {\bf \scalable}: How scalable is \method?
\item {\bf \practical}: How well \method works on real data?
\item {\bf \handsoff}: Does \method need manual tuning?
\end{compactenum}

\paragraph*{\hspace{-5mm} \underline{Setup, code, competitors, and datasets}}
\method was coded in Java and C\raisebox{0.5pt}{$++$}.
The joins in Algs.~\ref{algo:build_o_plot}-\ref{algo:scoring:mcs} employ the approach of `compact similarity joins'~\cite{DBLP:conf/icde/BryanEF08}.

We compared \method with $11$ state-of-the-art competitors: 
ABOD, FastABOD, LOCI, ALOCI, DB-Out, LOF, iForest and ODIN, which are coded in Java under the framework ELKI~(\url{elki-project.github.io}), besides; 
Gen2Out, D.MCA and RDA whose original source codes in Python were used.
Tab.~\ref{tab:hyperparameters} has the hyperparameter values employed.
\method was always tested with its default configuration\footref{note:sensitivity}.
The competitors were carefully tuned following hyperparameter-setting heuristics widely adopted in prior works, such as in \cite{iForest, SCiForest, Bandaragodaetal2018, LOF, Tingetal2020, DMCA, gen2out, Campos2016, kNN-Out, RDA}.  
Non-deterministic competitors were run~$10$ times per dataset; we report the average results.

\begin{table}[b]
\centering
\caption{Hyperparameter configuration.
\vspace{-2.5mm}
}
\resizebox{0.7\columnwidth}{!}{
\begin{tabular}{l|l}
  \textbf{Method} & \textbf{Values used}\\
  \hline \hline
  ALOCI & $g \in \{10,~15,~20\}$,~~$nmin = 20$,~~$\alpha = 4$\\
  DB-Out & $r \in \{\diamdataset \mult 0.05,~\diamdataset \mult 0.1,~\diamdataset \mult 0.25,~\diamdataset \mult 0.5\}$\\
  D.MCA & $\psi \in \{2,~4,~8,~\dots~\text{min}(1,024,~\numpoints \mult 0.3)\}$,\\
  & $t\in\{2,~4,~8,~\dots~128\}$,~~$p=\numpoints\mult0.1$\\
  FastABOD & $k \in \{1,~5,~10\}$\\
  Gen2Out & $lb = 1$,~~$ub = 11$,~~$md \in \{2,3\}$,~~$t \in \{2,~4,~8,~\dots~128\}$\\
  iForest & $t \in \{2,~4,~8,~\dots~128\}$,~~$\psi \in \{2,~4,~8,~\dots~\text{min}(1,024,~\numpoints \mult 0.3)\}$\\
  LOCI & $r \in \{\diamdataset \mult 0.05,~\diamdataset \mult 0.1,~\diamdataset \mult 0.25,~\diamdataset \mult 0.5\}$,~~$nmin = 20$,~~$\alpha = 0.5$\\
  LOF & $k \in \{1,~5,~10\}$\\
  ODIN & $k \in \{1,~5,~10\}$\\
  RDA & $nlayers \in \{2,~3,~4\}$,~~$dimdecay \in \{1,~2,~4\}$,\\
  & $niter \in \{20,~50\}$,~~$\lambda \in \{10e-5,~7.5e-5,~10e-4\}$\\
  \hline \hline
  \method & $\numberradiidefault$,~~$\maxslopedefault$,~~$\maxsizedefault$\\
  \hline \hline
\end{tabular}
}
\label{tab:hyperparameters}
\end{table}

Tab.~\ref{tab:datasets} summarizes our data, which we describe as~follows:
\begin{compactitem}
\item \textbf{Last Names:} 
$5k$ names of people frequent in the US (inliers), and $50$ names frequent elsewhere~(outliers).
\item \textbf{Fingerprints:} ridges from $398$ full (inliers) and $10$ partial~(outliers) fingerprints.
\item \textbf{Skeletons:} skeleton graphs from $200$ human~(inliers) and $3$ wild-animal (outliers) silhouettes.
\item \textbf{Axioms:} synthetic data with Gaussian-, cross- and~arc-shaped inliers following each axiom as shown in~Fig.~\ref{fig:axioms}.
\item \textbf{Popular benchmark datasets:} benchmark data from many real domains.
Importantly, HTTP and Annthyroid are known to have nonsingleton microclusters~\cite{SCiForest}.
\item \textbf{Shanghai and Volcanoes:} average RGB values extracted from satellite image tiles. Outliers are unknown.
\item \textbf{Uniform and Diagonal:} $2$-, $4$-, $20$-, and $50$-dim. data that follow a uniform distribution, or form a diagonal line.
\end{compactitem}

In all cases we have that
\begin{enumerate*}[label=(\roman*)]
\item for vector data, we use the Eu- clidean distance (but any other $L_p$ metric would work),~and
\item for nondimensional datasets the distance function is given by a domain expert.
\end{enumerate*}
For example, string-editing or soundex encoding distance\cite{fuzzystrmatch} for strings, and mathematical morphology\cite{Vincent1989} or tree-editing distance\cite{PawlikAugsten2015} for shapes or skeleton~graphs.
\vspace{-1mm}

\begin{table}[t]
\centering
\caption{Summary of datasets.
\vspace{-2.5mm}
}
\resizebox{0.7\columnwidth}{!}{
\begin{tabular}{ll|l|l|l|l}
  & \textbf{Dataset} & \rotatebox{70}{\textbf{\# Points}} & \rotatebox{70}{\parbox{16mm}{\textbf{\# Features\\ (Emb. Dim.)}}} & \rotatebox{70}{\parbox{18mm}{\textbf{Intrinsic\\ (Fractal) Dim.}}} & \rotatebox{70}{\textbf{\% Outliers}}\\
  \hline \hline
  \vspace{1.2mm}
  \multirow{3}{*}{\rotatebox{90}{\textbf{Non-dim.\ }}} & Last Names & $5,050$ & -- & $5.3$ & $2.22$ \\
  \vspace{1.2mm}
  & Fingerprints & $408$ & -- & $8.0$ & $2.45$ \\
  \vspace{1.2mm}
  & Skeletons & $203$ & -- & $2.1$ & $1.47$ \\
  \hline \hline
  \vspace{0.5mm}
  \multirow{2}{*}{\rotatebox{90}{\textbf{Axs.\ }}} & Gauss., Cross, Arc (\iaxiomshort) & $\sim 1$ million & $2$ & $1.7$ & $0.002$ \\
  & Gauss., Cross, Arc (\caxiomshort) & $\sim 1$ million & $2$ & $1.7$ & $0.01$ \\
  \hline \hline
  \multirow{17}{*}{\rotatebox{90}{\textbf{Popular Benchmark Datasets}}} & HTTP & $222,027$ & $3$ & $1.2$ & $0.03$  \\
  & Shuttle & $49,097$ & $9$ & $1.8$ & $7.15$  \\
  & kddcup08 & $24,995$ & $25$ & $3.6$ & $0.68$  \\
  & Mammography & $7,848$ & $6$ & $1.4$ & $3.22$  \\
  & Annthyroid & $7,200$ & $6$ & $1.8$ & $7.41$ \\
  & Satellite & $6,435$ & $36$ & $3.0$ & $31.64$  \\
  & Satimage2 & $5,803$ & $36$ & $3.0$ & $1.22$  \\
  & Speech & $3,686$ & $400$ & $5.9$ & $1.65$  \\
  & Thyroid & $3,656$ & $6$ & $0.7$ & $2.54$  \\
  & Vowels & $1,452$ & $12$ & $0.8$ & $3.17$  \\
  & Pima & $526$ & $8$ & $2.2$ & $4.94$  \\
  & Ionosphere & $350$ & $33$ & $1.6$ & $35.71$  \\
  & Ecoli & $336$ & $7$ & $1.9$ & $2.68$  \\
  & Vertebral & $240$ & $6$ & $1.9$ & $12.5$  \\
  & Glass & $213$ & $9$ & $1.3$ & $4.23$  \\
  & Wine & $129$ & $13$ & $2.3$ & $7.75$  \\
  & Hepatitis & $70$ & $20$ & $1.5$ & $4.29$  \\
  & Parkinson & $50$ & $22$ & $1.4$ & $4$  \\
  \hline \hline
  \multirow{2}{*}{\rotatebox{90}{\textbf{Sat.}}} & Volcanoes & $3,721$ & $3$ & $1.8$ & Unknown \\
  & Shanghai & $1,296$ & $3$ & $1.4$ & Unknown \\
  \hline \hline
  \vspace{1mm}
  \multirow{2}{*}{\rotatebox{90}{\textbf{Synt.\ }}} & Uniform & $1$ million & $\left[2,50\right]$ & $\left[2.0,50.0\right]$ & $0$ \\
  \vspace{1mm}
  & Diagonal & $1$ million & $\left[2,50\right]$ & $1.0$ & $0$ \\
  \hline \hline
\end{tabular}
}
\label{tab:datasets}
\end{table}

\begin{figure}[t]
    \centering
    \begin{tabular}{c}
       \includegraphics[width=0.99\linewidth]{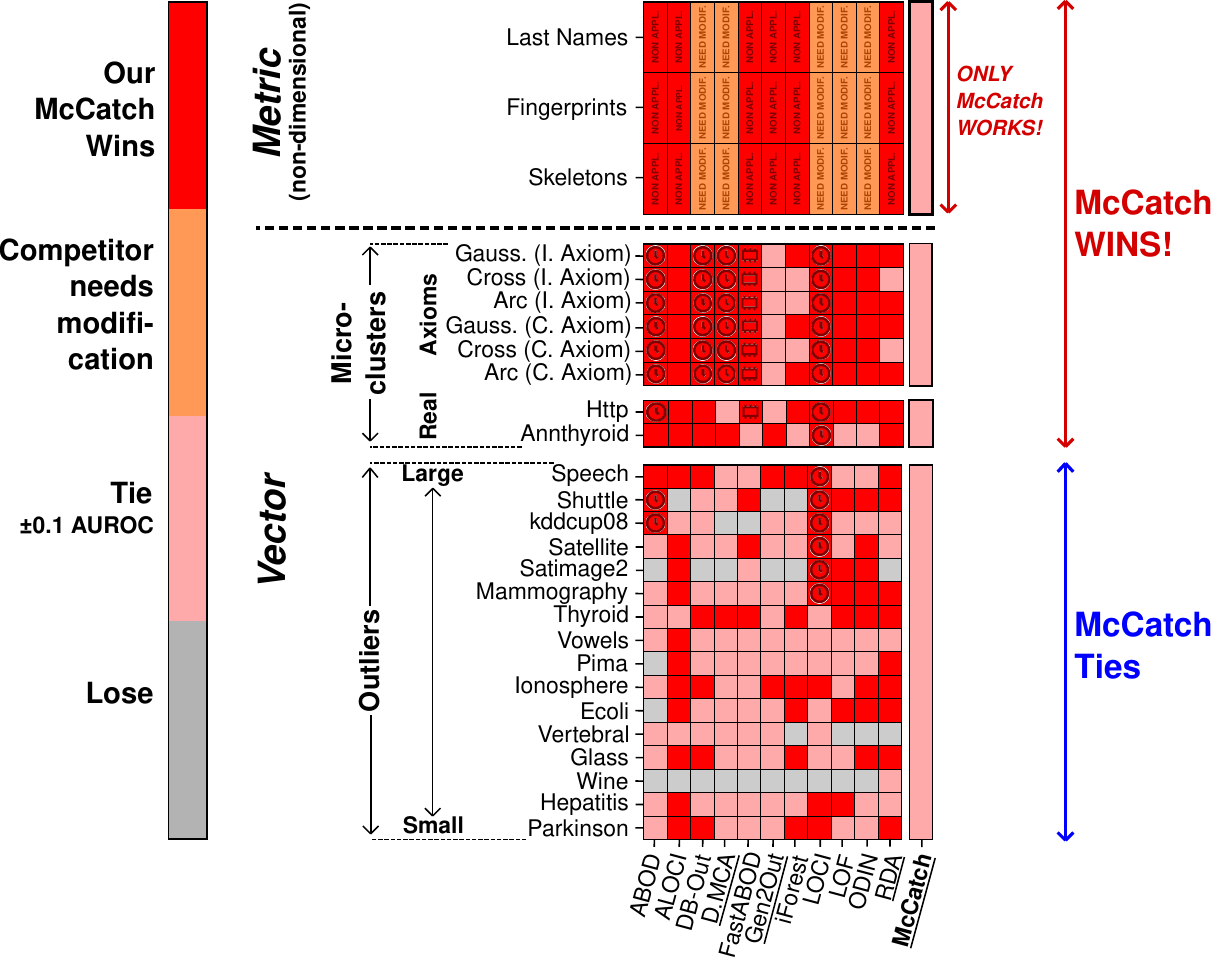} \\
    \end{tabular}
    \vspace{-2.5mm}
    \caption{\textbf{Q1. \method is \accurate \underline{\textit{and}} typically beats competition~(red):}
    Accuracy comparison.
    Top: \method wins in vector data with known non-singleton microclusters, and \textit{also} in nondimensional data.
    Bottom: our~method ties with the competitors in other cases.
    (best viewed in color)
    \label{fig:correctness}
    }
\end{figure}

\begin{figure*}[t]
    \begin{center}
        \begin{tabular}{c}
           \includegraphics[width=0.8\textwidth]{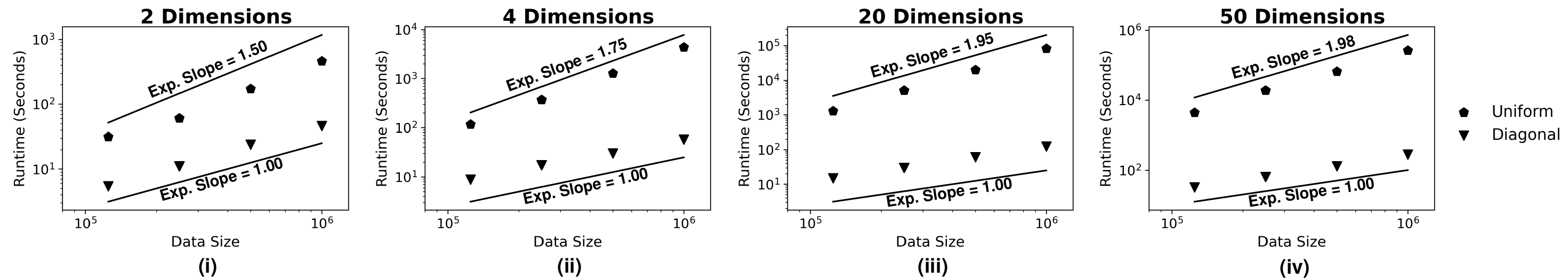} \\
        \end{tabular}
        \vspace{-2.5mm}
        \caption{\textbf{Q3. \method is \scalable:}
        Runtime vs. data size.
        Lines show expected slopes (Lemma~\ref{lem:timecomplexity}).
        Estimations and measurements agree very well:
        \method is subquadratic in all cases, despite embedding dimension.
        Note: Uniform in (iii) and (iv) have respectively \textit{\underline{fractal} dimension~$\mathit{20}$ and $\mathit{50}$!} 
        \label{fig:scalability} 
        }
        \vspace{-9mm}
    \end{center}
\end{figure*}

\vspace{-6mm}
\subsection{Q1. \method is \accurate}
\vspace{-2mm}

Fig.~\ref{fig:correctness} reports results regarding the accuracy of \method.
For every dataset where outliers are known, we compare the Area Under the Receiver Operator Characteristic curve~(AUROC) obtained by \method with that of each competitor.
All methods were evaluated according to the anomaly scores they reported per point.
Note that it was unfeasible to run D.MCA, ABOD, FastABOD, DB-Out and LOCI in some datasets;
they either required an excessive runtime (i.e., $>10$ hours) or an excessive RAM memory usage (i.e., $>30$ GB).
These two cases are respectively denoted by symbols~$\raisebox{-1pt}{\text{\includegraphics[height=1.5ex]{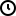}}}$
and~$\raisebox{-1pt}{\text{\includegraphics[height=1.5ex]{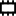}}}$.

\begin{table}
\begin{center}
\caption{\textbf{Accuracy Evaluation}\label{tab:correctness}
\vspace{-3mm}
}
\resizebox{1.0\columnwidth}{!}{
\begin{tabular}{l|c|c|c|c|c|c|c|c|c|c|c||c|}
       \diagbox{\textbf{Metric}}{\textbf{Method}}
       & \rotatebox{70}{ABOD}
       & \rotatebox{70}{ALOCI}
       & \rotatebox{70}{DB-Out}
       & \rotatebox{70}{\underline{D.MCA}}
       & \rotatebox{70}{FastABOD}
       & \rotatebox{70}{\underline{Gen2Out}}
       & \rotatebox{70}{iForest}
       & \rotatebox{70}{LOCI}
       & \rotatebox{70}{LOF}
       & \rotatebox{70}{ODIN}
       & \rotatebox{70}{\underline{RDA}}
       & \rotatebox{70}{\underline{\textbf{\method}}} \\
\hline
    H. Mean Rank (AUROC) & \cellcolor{red!60}$2.2$ & $4.6$ & $5.1$ & $4.4$ & $2.8$ & $2.8$ & $3.5$ & $6.0$ & $4.9$ & $4.9$ & $3.6$ & \cellcolor{red!100}$1.8$ \\
    H. Mean Rank (AP) & \cellcolor{red!60}$2.5$ & $5.2$ & $5.5$ & $4.8$ & $2.8$ & $2.6$ & $3.0$ & $6.1$ & $3.6$ & $4.7$ & $2.8$ & \cellcolor{red!100}$2.3$ \\
    H. Mean Rank (Max-F1) & $2.8$ & $4.3$ & $4.5$ & $5.0$ & \cellcolor{red!60}$2.5$ & $3.9$ & $3.0$ & $5.4$ & $2.8$ & $3.7$ & $6.5$ & \cellcolor{red!100}$1.8$ \\ \hline
\end{tabular}
}
\vspace{-4mm}
\end{center}
\end{table}

Tab.~\ref{tab:correctness} reports the harmonic mean of the ranking positions of each method over all datasets.
Besides AUROC, we consider additional metrics: Average Precision (AP) and Max-F1.
\method outperforms \textit{every} competitor in \textit{all} three metrics.

Overall, \method is the best option.
It wins in the vector datasets with known nonsingleton mcs; see the many red squares in the `Microclusters' section of Fig.~\ref{fig:correctness}. 
And, it ties with the competitors in the other vector datasets.
\method is also the only method directly applicable to metric data.
Every competitor is either nonapplicable or needs modifications when the data has no dimensions; see the red and orange rectangles at the top Fig.~\ref{fig:correctness}.
Choosing \method over the others is thus advantageous in \textit{both} vector and metric data.

\begin{table}[t]
\centering
\caption{\textbf{Q2. \method is \principled:} it obeys all axioms. Every competitor misses one or more axioms. (best viewed in color)\label{tab:t-test}
\vspace{-2mm}
}
\resizebox{\columnwidth}{!}{
\begin{tabular}{l||cc|cc|cc||cc|cc|cc||}
      & \multicolumn{6}{c||}{\iaxiom}
      & \multicolumn{6}{c||}{\caxiom}  \\ 
      & \multicolumn{2}{c|}{Gaussian} & \multicolumn{2}{c|}{Cross} & \multicolumn{2}{c||}{Arc}
      & \multicolumn{2}{c|}{Gaussian} & \multicolumn{2}{c|}{Cross} & \multicolumn{2}{c||}{Arc}  \\
      & Stat & p-value & Stat & p-value & Stat & p-value & Stat & p-value & Stat & p-value & Stat & p-value \\
\hline\hline
\underline{\textbf{\method}} & \cellcolor{red!100}$2.6$ & \cellcolor{red!100}$9.3\mathrm{e}{-3}$ & \cellcolor{red!100}$9.0$ & \cellcolor{red!100}$1.6\mathrm{e}{-14}$ & \cellcolor{red!100}$22.3$ & \cellcolor{red!100}$3.1\mathrm{e}{-40}$ &
\cellcolor{red!100}$32.7$ & \cellcolor{red!100}$1.3\mathrm{e}{-54}$ & \cellcolor{red!100}$225.7$ & \cellcolor{red!100}$6.0\mathrm{e}{-135}$ & \cellcolor{red!100}$1,153.8$ & \cellcolor{red!100}$2.4\mathrm{e}{-204}$ \\
\hline\hline
\underline{Gen2Out} & \cellcolor{red!100}$5.1$ & \cellcolor{red!100}$1.2\mathrm{e}{-6}$ & \cellcolor{teal!80}{Fail} & \cellcolor{teal!80}{Fail} & \cellcolor{teal!80}{Fail} & \cellcolor{teal!80}{Fail} & \cellcolor{red!100}$32.1$ & \cellcolor{red!100}$6.6\mathrm{e}{-54}$ & \cellcolor{teal!80}{Fail} & \cellcolor{teal!80}{Fail} & \cellcolor{teal!80}{Fail} & \cellcolor{teal!80}{Fail}\\
ABOD & \cellcolor{teal!25}{N.A.} & \cellcolor{teal!25}{N.A.} & \cellcolor{teal!25}{N.A.} & \cellcolor{teal!25}{N.A.} & \cellcolor{teal!25}{N.A.} & \cellcolor{teal!25}{N.A.} & \cellcolor{teal!25}{N.A.} & \cellcolor{teal!25}{N.A.} & \cellcolor{teal!25}{N.A.} & \cellcolor{teal!25}{N.A.} & \cellcolor{teal!25}{N.A.} & \cellcolor{teal!25}{N.A.} \\
ALOCI & \cellcolor{teal!25}{N.A.} & \cellcolor{teal!25}{N.A.} & \cellcolor{teal!25}{N.A.} & \cellcolor{teal!25}{N.A.} & \cellcolor{teal!25}{N.A.} & \cellcolor{teal!25}{N.A.} & \cellcolor{teal!25}{N.A.} & \cellcolor{teal!25}{N.A.} & \cellcolor{teal!25}{N.A.} & \cellcolor{teal!25}{N.A.} & \cellcolor{teal!25}{N.A.} & \cellcolor{teal!25}{N.A.} \\
DB-Out & \cellcolor{teal!25}{N.A.} & \cellcolor{teal!25}{N.A.} & \cellcolor{teal!25}{N.A.} & \cellcolor{teal!25}{N.A.} & \cellcolor{teal!25}{N.A.} & \cellcolor{teal!25}{N.A.} & \cellcolor{teal!25}{N.A.} & \cellcolor{teal!25}{N.A.} & \cellcolor{teal!25}{N.A.} & \cellcolor{teal!25}{N.A.} & \cellcolor{teal!25}{N.A.} & \cellcolor{teal!25}{N.A.}\\
\underline{D.MCA} & \cellcolor{teal!25}{N.A.} & \cellcolor{teal!25}{N.A.} & \cellcolor{teal!25}{N.A.} & \cellcolor{teal!25}{N.A.} & \cellcolor{teal!25}{N.A.} & \cellcolor{teal!25}{N.A.} & \cellcolor{teal!25}{N.A.} & \cellcolor{teal!25}{N.A.} & \cellcolor{teal!25}{N.A.} & \cellcolor{teal!25}{N.A.} & \cellcolor{teal!25}{N.A.} & \cellcolor{teal!25}{N.A.} \\
FastABOD & \cellcolor{teal!25}{N.A.} & \cellcolor{teal!25}{N.A.} & \cellcolor{teal!25}{N.A.} & \cellcolor{teal!25}{N.A.} & \cellcolor{teal!25}{N.A.} & \cellcolor{teal!25}{N.A.} & \cellcolor{teal!25}{N.A.} & \cellcolor{teal!25}{N.A.} & \cellcolor{teal!25}{N.A.} & \cellcolor{teal!25}{N.A.} & \cellcolor{teal!25}{N.A.} & \cellcolor{teal!25}{N.A.} \\
iForest & \cellcolor{teal!25}{N.A.} & \cellcolor{teal!25}{N.A.} & \cellcolor{teal!25}{N.A.} & \cellcolor{teal!25}{N.A.} & \cellcolor{teal!25}{N.A.} & \cellcolor{teal!25}{N.A.} & \cellcolor{teal!25}{N.A.} & \cellcolor{teal!25}{N.A.} & \cellcolor{teal!25}{N.A.} & \cellcolor{teal!25}{N.A.} & \cellcolor{teal!25}{N.A.} & \cellcolor{teal!25}{N.A.} \\
LOCI & \cellcolor{teal!25}{N.A.} & \cellcolor{teal!25}{N.A.} & \cellcolor{teal!25}{N.A.} & \cellcolor{teal!25}{N.A.} & \cellcolor{teal!25}{N.A.} & \cellcolor{teal!25}{N.A.} & \cellcolor{teal!25}{N.A.} & \cellcolor{teal!25}{N.A.} & \cellcolor{teal!25}{N.A.} & \cellcolor{teal!25}{N.A.} & \cellcolor{teal!25}{N.A.} & \cellcolor{teal!25}{N.A.} \\
LOF & \cellcolor{teal!25}{N.A.} & \cellcolor{teal!25}{N.A.} & \cellcolor{teal!25}{N.A.} & \cellcolor{teal!25}{N.A.} & \cellcolor{teal!25}{N.A.} & \cellcolor{teal!25}{N.A.} & \cellcolor{teal!25}{N.A.} & \cellcolor{teal!25}{N.A.} & \cellcolor{teal!25}{N.A.} & \cellcolor{teal!25}{N.A.} & \cellcolor{teal!25}{N.A.} & \cellcolor{teal!25}{N.A.} \\
ODIN & \cellcolor{teal!25}{N.A.} & \cellcolor{teal!25}{N.A.} & \cellcolor{teal!25}{N.A.} & \cellcolor{teal!25}{N.A.} & \cellcolor{teal!25}{N.A.} & \cellcolor{teal!25}{N.A.} & \cellcolor{teal!25}{N.A.} & \cellcolor{teal!25}{N.A.} & \cellcolor{teal!25}{N.A.} & \cellcolor{teal!25}{N.A.} & \cellcolor{teal!25}{N.A.} & \cellcolor{teal!25}{N.A.} \\
\underline{RDA} & \cellcolor{teal!25}{N.A.} & \cellcolor{teal!25}{N.A.} & \cellcolor{teal!25}{N.A.} & \cellcolor{teal!25}{N.A.} & \cellcolor{teal!25}{N.A.} & \cellcolor{teal!25}{N.A.} & \cellcolor{teal!25}{N.A.} & \cellcolor{teal!25}{N.A.} & \cellcolor{teal!25}{N.A.} & \cellcolor{teal!25}{N.A.} & \cellcolor{teal!25}{N.A.} & \cellcolor{teal!25}{N.A.} \\
\hline
\end{tabular}
}
\end{table}

\vspace{-2mm}
\subsection{Q2. \method is \principled}
\vspace{-1.5mm}

Tab.~\ref{tab:t-test} reports the results of an experiment performed to verify if the methods obey the axioms of Sec.~\ref{sec:axioms}.
Except for \method and Gen2Out, no method provides a score per microcluster; thus, they all fail to obey the axioms, by design.
We compared \method and Gen2Out statistically, by conducting two-sample t-tests, testing for $50$ datasets per axiom and shape of the cluster of inliers -- thus, summing up to $300$ datasets -- if the score obtained for the green microcluster (see Fig.~\ref{fig:axioms}) is larger than that of the red microcluster, against the null hypothesis that they are indifferent.
Note that Gen2Out misses both axioms by failing to find the microclusters in~\textit{every} one of the $200$ datasets with a cross- or arc-shaped cluster of inliers.
Distinctly, \method does \textit{not} miss any microcluster nor axiom.
Hence, our method obeys all the axioms a microcluster detector should follow; every single competitor~fails.

\vspace{-1.5mm}
\subsection{Q3. \method is \scalable}
\vspace{-1mm}

Fig.~\ref{fig:scalability} has results on the scalability of \method.
We plot runtime vs. data size for random samples of Uniform and Diagonal, considering their $2$- to $50$-dimensional versions.
The lines reflect the slopes expected from Lemma~\ref{lem:timecomplexity}.
Estimations and measurements agree.
As expected, \method scales subquadratically in every single case, regardless of the embedding dimension of the data. 
Particularly, note in \ref{fig:scalability}(iii) and (iv) that Uniform have respectively \textit{\underline{fractal} dimension~$\mathit{20}$ and $\mathit{50}$!}

Tab.~\ref{tab:runtime} reports runtime for \method and the other microcluster detectors in data of large cardinality or dimensionality.
Note that \method is the fastest method in nearly all cases, e.g., $\mathit{>50}$ \textit{times faster} than D.MCA in large data. 
We also emphasize our method is the only one that reports principled results because efficiency is worthless without effectiveness.

\begin{table}
\begin{center}
\caption{\textbf{Runtime Evaluation}\label{tab:runtime}
\vspace{-2.5mm}
}
\resizebox{0.7\columnwidth}{!}{
\begin{tabular}{l|c|c||c|}
       \diagbox{\textbf{Dataset}}{\textbf{Method}}
       & \underline{D.MCA}
       & \underline{Gen2Out}
       & \textbf{\underline{\method}} \\
\hline
    Gauss., Cross, Arc (\iaxiomshort) & $>10$ \textit{hours} & \cellcolor{red!60}$2$ \textit{hours} & \cellcolor{red!100}$12$ min. \\
    Gauss., Cross, Arc (\caxiomshort) & $>10$ \textit{hours} & \cellcolor{red!60}$2$ \textit{hours} & \cellcolor{red!100}$12$ min. \\
    HTTP & \cellcolor{red!60}$6$ min. & $18$ min. & \cellcolor{red!100}$4$ min. \\
    Satellite & \cellcolor{red!60}$25$ sec. & $59$ sec. & \cellcolor{red!100}$7$ sec. \\
    Speech & \cellcolor{red!100}$19$ sec. & $46$ sec. & \cellcolor{red!60}$34$ sec. \\ \hline\hline
    \textbf{\principled} & & & \checkmarkmethod \\ \hline
\end{tabular}
}
\vspace{-4mm}
\end{center}
\end{table}

\vspace{-2mm}
\subsection{Q4. \method is \practical} \label{sec:practical}
\vspace{-2mm}

\begin{figure*}[t]
    \begin{center}
        \begin{tabular}{c}
           \includegraphics[width=0.6\textwidth]{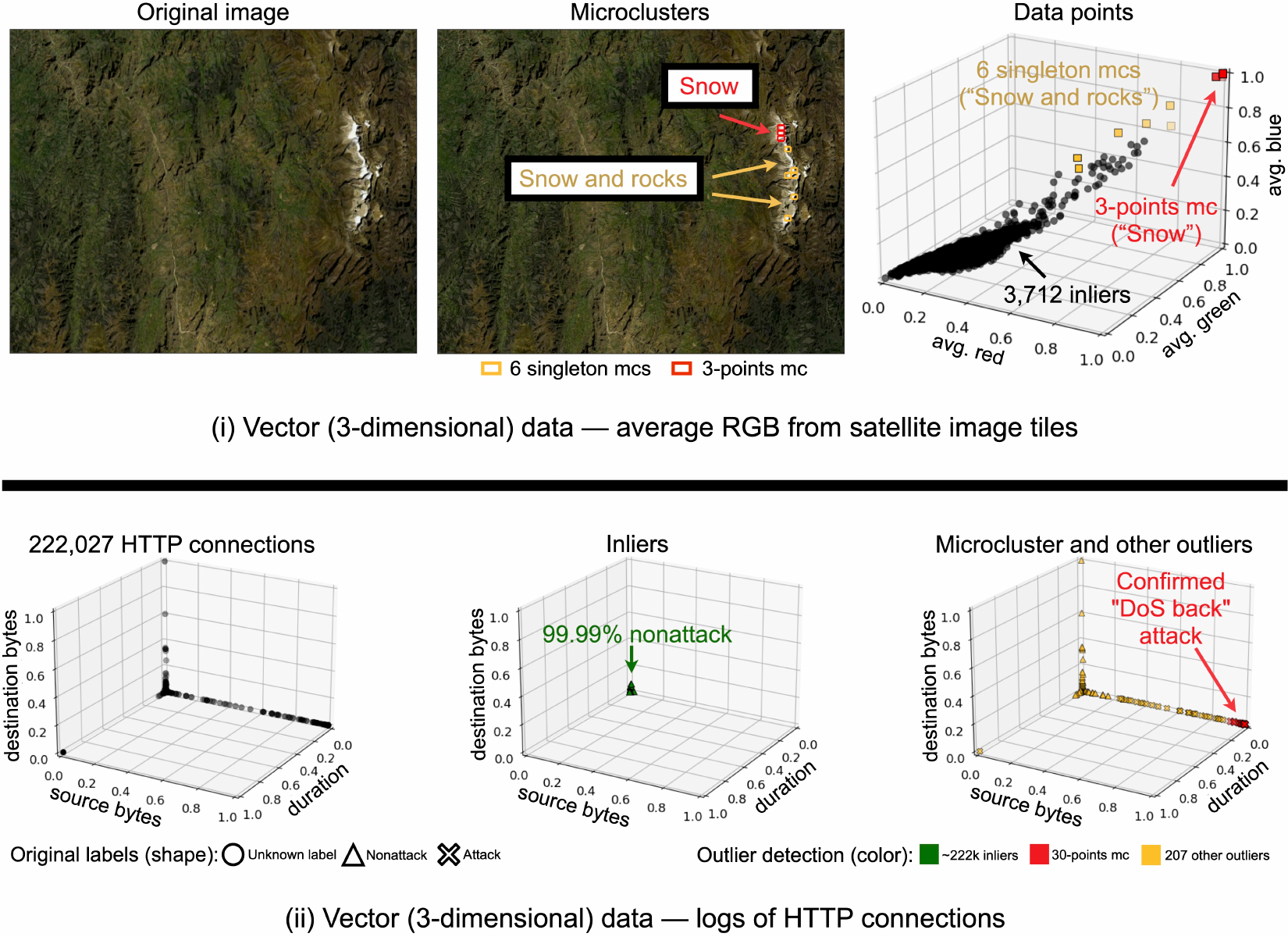} \\
        \end{tabular}
        \vspace{-2mm}
        \caption{\textbf{Q4. \method is \practical:} finding valid mcs unsupervisedly.
        (i)~on a satellite image -- it spots a $3$-points mc of snow in a volcano, and other~outlying tiles with snow and rocks;
        (ii) on network security data -- it spots a $30$-points mc of confirmed attacks, and other confirmed attacks.
        (best viewed in color)
        \label{fig:microclusters-others} 
        }
    \end{center}
    \vspace{-8mm}
\end{figure*}

\paragraph*{\hspace{-5mm} \underline{Attention routing}}
We studied the images on the left sides of Figs.~\ref{fig:crownJewel}(i) and~\ref{fig:microclusters-others}(i).
Each image was split into rectangular tiles from which average RGB values were extracted, thus leading to Shanghai and Volcanoes.
Ground truth labels are unknown and, thus, AUROC cannot be computed.
Regarding Shanghai,
\method found two $2$-points clusters formed from unusually colored roofs of buildings (red and blue tiles in Fig.~\ref{fig:crownJewel}(i) -- center), and other outlying tiles~(in yellow).
The red tiles are unusual and alike, and the same happens with the blue ones, while the yellow tiles are unusual but very distinct from one another.
The plot in the right side of Fig.~\ref{fig:crownJewel}(i) corroborates our findings; see the red and blue mcs, and the scattered, yellow outliers.
Similar results were seen in~Volcanoes, with a $3$-points cluster of snow found on the summit of the volcano; see Fig.~\ref{fig:microclusters-others}(i).
Thus, \method can successfully and unsuper-visedly route people's attention to notable image~regions.

\paragraph*{\hspace{-5mm} \underline{Unusual names}} Fig.~\ref{fig:crownJewel}(ii) reports results on nondimensional data.
We studied Last Names using the L-Edit distance.
\method earned a $0.75$ AUROC by finding the outliers on the left side of the figure. 
We investigated these names and discovered that they have a large variety of geographic origins; see the country flags.
Distinctly, the low-scored names mostly come from the UK; 
see the five ones with the lowest scores on the illustration's right side.
We conclude that \method distinguished English and NonEnglish names in the data.

\paragraph*{\hspace{-5mm} \underline{Unusual skeletons}} Fig.~\ref{fig:crownJewel}(iii) also regards nondimensional data.
We studied the $203$ graphs in Skeletons using the Graph edit distance.
\method earned a \textit{perfect} AUROC of $1$ by~finding all $3$ wild-animal skeletons on the figure's left side.
Thus, it successfully found the unusual, non-human skeletons.

\paragraph*{\hspace{-5mm} \underline{Network attacks}}
Fig.~\ref{fig:microclusters-others}(ii) reports results from HTTP.
The raw data is in the left-side plot;
there are $222$k connections described by numbers of bytes sent and received, and durations.
The inliers and outliers found by \method are at the center- and the right-side plots, respectively.
AUROC is $0.96$.
Note that $99.99\%$ of the inliers are not attacks; the outliers are either confirmed attacks or connections with a clear rarity, as they have oddly large durations, or numbers of bytes sent or received.
The most notable result is the detection of a $30$-points mc of \textit{confirmed} `DoS back' attacks,
which are characterized by sending too many bytes to a server aimed at overloading it.
Hence, our \method unsupervisedly found a cluster of frauds exploiting the same vulnerability in cybersecurity.

\vspace{-1mm}
\subsection{Q5. \method is \handsoff} \label{sec:sensitivity}
\vspace{-1mm}

\method needs only a few hyperparameters, namely, $\numberradii$, $\maxslope$, $\maxsize$.
It turns out that the values we have used 
($\numberradiidefaultonlyvalue$, $\maxslopedefaultonlyvalue$, $\maxsizedefaultonlyvalue$, respectively), are at a smooth plateau (see Fig.~\ref{fig:sensitivity}): that is, the accuracy is insensitive
to the exact choice of hyperparameter values.
Specifically, Fig.~\ref{fig:sensitivity} shows accuracy~vs.~$\numberradii$, $\maxslope$, $\maxsize$, respectively,
and every line corresponds to one of the datasets. Notice that all lines are near flat, highlighting the fact that \method needs no hyperparameter fine-tuning.
To avoid clutter, we only show the largest real dataset (HTTP) with line-point format.

\begin{figure}[htb]
    \centering
    \begin{tabular}{c}
       \includegraphics[width=0.875\linewidth]{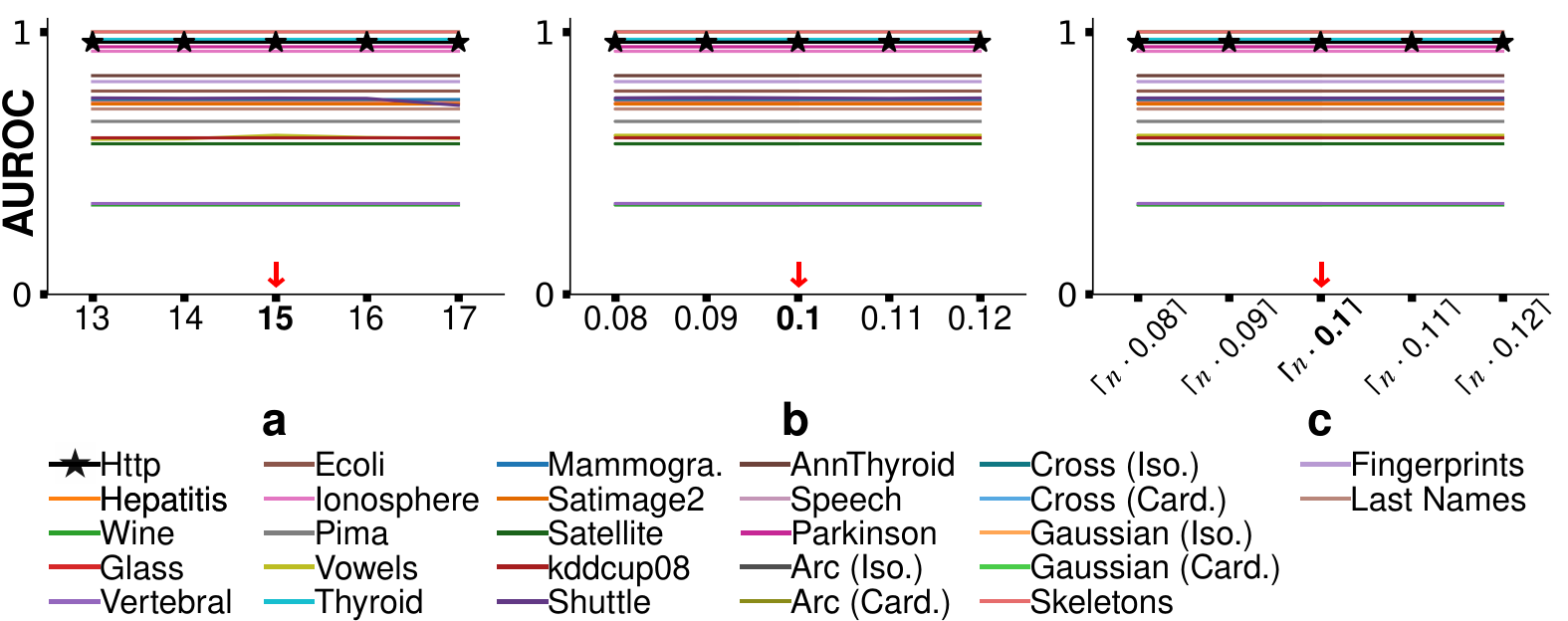} \\
    \end{tabular}
    \vspace{-2.5mm}
    \caption{\textbf{Q5. \method is \handsoff} and insensitive
    to the ($a,b,c$) hyper-parameter values:
    Accuracy has a smooth plateau.
    (best viewed in~color)
    \label{fig:sensitivity} 
    \vspace{-2mm}
    }
\end{figure}

\vspace{-2mm}
\section{Conclusions}
\label{sec:concl}
\vspace{-3mm}
We presented \method to address the microcluster-detection problem.
The main idea is to leverage our proposed ‘Oracle’ plot (\fplengthname vs. \mplengthname). \method achieves five goals:
\begin{compactenum}[{G}1.]
\item \textbf{\generalinput:} \method works with any metric dataset, including nondimensional ones, as shown in Fig.~\ref{fig:crownJewel}.
    It is achieved by depending solely on distances.
\item \textbf{\generaloutput:} \method ranks singleton (`one-off' outliers) and nonsingleton mcs \textit{together}, by anomalousness. See Probl.~\ref{def:problem} and Def.~\ref{def:score}.
    It is achieved thanks to
    a new compression-based idea (Fig.~\ref{fig:score}) to compute scores.
\item \textbf{\principled:} \method obeys axioms;
see Tab.~\ref{tab:t-test}.
It is achieved thanks to the new group axioms of Fig.~\ref{fig:axioms}~and our score-computation strategy that match human~intuition.
\item \textbf{\scalable:} \method is subquadratic on the number of points, as shown in Lemma~\ref{lem:timecomplexity} and Fig.~\ref{fig:scalability}. 
It is made possible by carefully building on spatial joins and metric~trees.
\item \textbf{\handsoff:} \method needs no manual tuning. 
It~is achieved due to our MDL-based idea to get the \mindistname~$\mindist$ from the given data; see Def.~\ref{def:mindist} and Figs.~\ref{fig:minimum:cost} and \ref{fig:sensitivity}.~We also set hyperparameters to the reasonable defaults of~Alg.~\ref{algo:main}.
\end{compactenum}

\textit{\underline{No}} competitor fulfills all of these goals; see Tab.~\ref{tab:salesman}. 
Also, \method is deterministic, ranks the points, and gives explainable results.
We studied $31$ real and synthetic datasets~and showed \method outperforms $11$ competitors, especially when the data has nonsingleton microclusters or is nondimensional.
We also showcased \method's ability to find meaningful mcs in graphs, fingerprints, logs of network connections, text data, and satellite images.
For example, it found a $30$-points mc of \textit{confirmed} attacks in the network logs,
taking \textit{$\mathit{\sim\hspace{-0.5mm}3}$ minutes} for $222$K points on a stock desktop; 
see~Fig.~\ref{fig:microclusters-others}(ii).

{\bf Reproducibility:} For reproducibility, our data and code are available at \repository.

\bibliographystyle{IEEEtran}
\bibliography{ref}

\end{document}